\newtheorem{theorem}{Theorem}
\newtheorem{definition}{Definition}
\newcommand{\bb}[1]{\mathbb{#1}}
\title[AAMAS-2025 Formatting Instructions]{Bayesian Collaborative Bandits with Thompson Sampling for Improved Outreach in Maternal Health Program}
\author{Arpan Dasgupta}
\affiliation{
  \institution{Google DeepMind}
  \city{Bengaluru}
  \country{India}}
\email{arpandg@google.com}
\author{Gagan Jain}
\affiliation{
  \institution{Google DeepMind}
  \city{Bengaluru}
  \country{India}}
\email{jaingagan@google.com}
\author{Arun Suggala}
\affiliation{
  \institution{Google DeepMind}
  \city{Bengaluru}
  \country{India}}
\email{arunss@google.com}
\author{Karthikeyan Shanmugam}
\affiliation{
  \institution{Google DeepMind}
  \city{Bengaluru}
  \country{India}}
\email{karthikeyanvs@google.com}
\author{Milind Tambe}
\affiliation{
  \institution{Harvard University, Google DeepMind}
  \city{Boston}
  \country{USA}}
\email{milindtambe@google.com}
\author{Aparna Taneja}
\affiliation{
  \institution{Google DeepMind}
  \city{Bengaluru}
  \country{India}}
\email{aparnataneja@google.com}
\begin{abstract}
    Mobile health (mHealth) programs face a critical challenge in optimizing the timing of automated health information calls to beneficiaries. This challenge has been formulated as a collaborative multi-armed bandit problem, requiring online learning of a low-rank reward matrix. Existing solutions often rely on heuristic combinations of offline matrix completion and exploration strategies. In this work, we propose a principled Bayesian approach using Thompson Sampling for this collaborative bandit problem. Our method leverages prior information through efficient Gibbs sampling for posterior inference over the low-rank matrix factors, enabling faster convergence. We demonstrate significant improvements over state-of-the-art baselines on a real-world dataset from the world's largest maternal mHealth program. Our approach achieves a $16\%$ reduction in the number of calls compared to existing methods and a $47$\% reduction compared to the deployed random policy. This efficiency gain translates to a potential increase in program capacity by $0.5-1.4$ million beneficiaries, granting them access to vital ante-natal and post-natal care information.  Furthermore, we observe a $7\%$ and $29\%$ improvement in beneficiary retention (an extremely hard metric to impact) compared to state-of-the-art and deployed baselines, respectively.  Synthetic simulations further demonstrate the superiority of our approach, particularly in low-data regimes and in effectively utilizing prior information. We also provide a theoretical analysis of our algorithm in a special setting using Eluder dimension.
\end{abstract}
\keywords{Thompson Sampling, Maternal Health, Multi-arm Bandits}
\newcommand{\BibTeX}{\rm B\kern-.05em{\sc i\kern-.025em b}\kern-.08em\TeX}
\begin{document}


\pagestyle{fancy}
\fancyhead{}


\maketitle 


\section{Introduction}
Mobile health (mHealth) programs offer a powerful tool for delivering vital health information, but face a critical challenge: optimizing the timing of automated calls to maximize engagement. This is particularly important in maternal mHealth programs, which play a vital role in reducing maternal mortality rates - a key target within the WHO's Sustainable Development Goals \cite{WHOTargetMaternal}. To ensure these programs achieve their full potential, automated calls must be strategically timed to achieve high pick-up rates, leading to improved health outcomes for mothers and infants.

This paper focuses on Kilkari \cite{kilkari}, the world's largest maternal mHealth program. Implemented nationwide across India by the Ministry of Health and Family Welfare in partnership with the NGO ARMMAN \cite{armmanArmmanHome}, Kilkari delivers critical maternal and child health information through automated voice calls throughout pregnancy and the post child birth period.  Kilkari has served over $40$ million mothers across India so far, with over 3 million active subscribers at any given time.  The importance of listening to these voice messages has been shown to have significant impact on the health outcomes of mothers and babies~\cite{mohan2021can}, particularly among the most marginalised who have the most to benefit from this program, and have the least access to resources. 

However, one major challenge faced by the program is that the pick-up rate of the calls is very low. This is largely due to the fact that different beneficiaries prefer to listen to these calls at different time slots and on different days due to practical constraints such as shared family phones, different working hours, household responsibilities, as well as network reliability, particularly in rural districts of India ~\cite{JJH}.
To address this, the program attempts sending the automated voice calls multiple times in a week, until a call is answered,  
with almost $50\%$ of the economically weakest beneficiaries
requiring more than $6$ attempts on average~\cite{mohan2021can} for a single message in a week and on average
$23\%$ beneficiaries being unreachable despite multiple attempts~\cite{lalan2023analyzing}.
In fact, due to the scale of the program, consistent low listenership of the calls can even lead to beneficiaries being dropped from the program.

Optimizing call timing is therefore essential to improve engagement, and maximize the dissemination of critical health information.  Moreover, it could also help reduce critical bandwidth being spent heavily on retries, which would then enable scaling the outreach of the program to millions of more mothers across the country. While individual preferences would ideally inform call scheduling,  Kilkari's scale prohibits collecting individual time preferences, or demographic information that could help predict those preferences. This necessitates a robust, scalable solution to predict optimal call times based on limited information.


To this end, we formulate the call pick-up problem as a multi-user multi-armed bandit problem, where each time slot represents an arm, and pulling an arm corresponds to sending a call at that time. The reward is based on whether the call is answered. The bandit algorithm must learn which time slot to use for each user to maximize the probability of answering. Current state-of-the-art technique \cite{pal2024improving} formulated this problem as collaborative bandits with a low-rank assumption on user preferences. This low-rank assumption is justified by the observation that groups of users exhibit similar preferences in practice (\cref{fig:pickup_mat} shows an example of the low-rank pick-up problem). This work attempted to collaborate across users to quickly learn their preferences. To this end, the authors employed offline matrix factorization with Boltzmann exploration \cite{cesa2017boltzmann}.  However, this approach is heuristic, lacks theoretical guarantees, and performs sub-optimal exploration, leading to worse regret in practice. Furthermore, it falls short when there are new users joining the program (as in the real world) as it requires a large number of samples to learn the preferences, hence delaying the inference of optimal slots which further increases the risk of drop off from the program. It also lacks the ability to incorporate any available prior information.


In this paper, we propose a Bayesian formulation for the collaborative bandit problem utilizing Thompson Sampling (TS). Bayesian solutions \cite{bharadiya2023review} allow the use of priors to quickly converge to a solution even with limited data, which is crucial in this context, as ineffective exploration could lead to delayed time slot inference and increased drop-off risk.  Thompson Sampling \cite{thompson1933likelihood} is also shown \cite{nakajima2011theoretical} to be a very effective method 
and can provide much tighter bounds than the previously used Boltzmann exploration.


While TS is efficient in terms of regret, its exact implementation is computationally expensive due to the cost of posterior sampling. To address this, we develop a computationally efficient heuristic based on Stochastic Gradient Langevin Dynamics (SGLD) for posterior sampling in TS. Empirically, we show that our algorithm outperforms existing techniques in both cluster and general low-rank settings. We demonstrate its superiority using a real-world dataset from the Kilkari program \cite{ARMMAN_kilkari}, showing how it can significantly increase call engagement and pick-up rates. While we focus on Kilkari, our methods are applicable to a wide range of mHealth programs.

\begin{figure}
    \centering
    \includegraphics[scale=0.35]{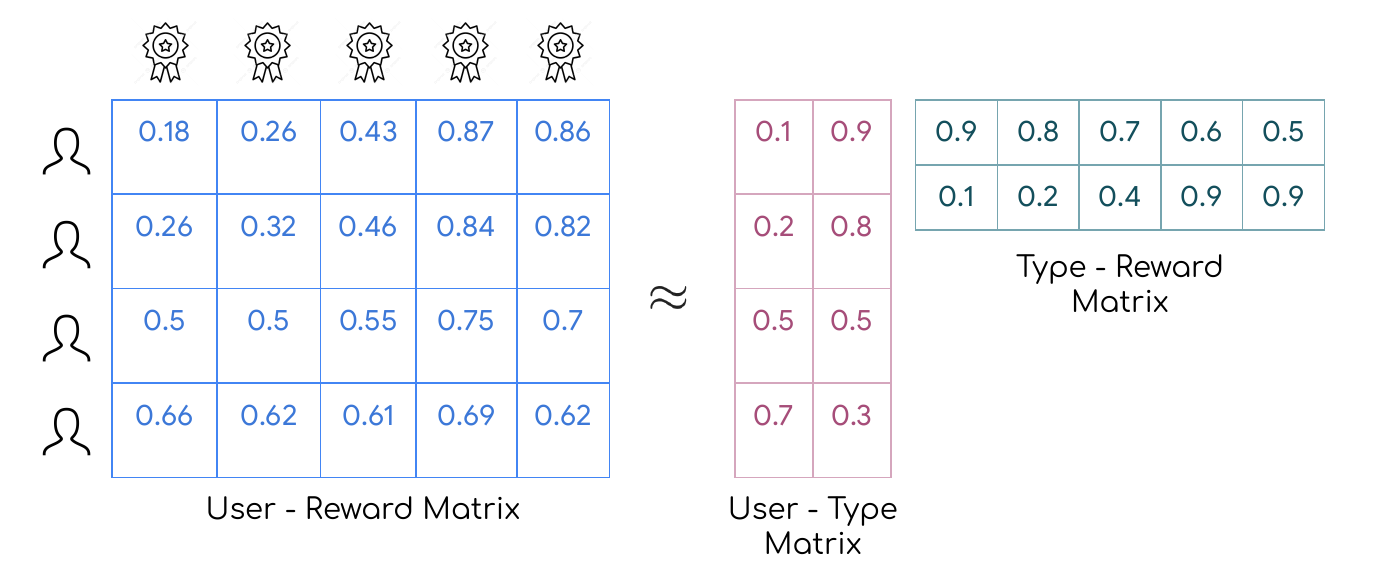}
    \caption{ (Left) Example pick-up matrix-each entry such as 0.18 (top left) represents likelihood of a call being answered by the first beneficiary in the first time slot. (Right) Matrix decomposition into  user X user type and user type X pickup rate probability matrices.}
    \label{fig:pickup_mat}
    \vspace{-0.2in}
\end{figure}

The key contributions of the paper is
a novel Thompson Sampling based algorithm for collaborative multi-armed bandit problem using alternating SGLD for efficient posterior sampling to make the algorithm scalable for potential real world deployment (Section~\ref{sec:proposedmethod}). We also provide the first (to our knowledge) \textit{Eluder dimension} characterization for a special clustered setting of the low rank bandit problem with infinite arms (Section~\ref{sec:bounds_cluster}) which is a subset of the general low-rank case. Eluder dimension is the principal complexity measure that characterizes Bayesian regret for Thompson sampling in general \cite{russo2013eluder}. Finally, we demonstrate empirically the effectiveness of the approach on a time slot inference problem with a simulated and a real world dataset (Section~\ref{sec:experiments}) from the largest maternal mHealth program in the world.

The proposed algorithm exploits the collaborative nature of the problem to infer the optimal time slots to send automated voice messages to beneficiaries. We show that the proposed method reduces the number of call attempts needed to reach out to beneficiaries by a staggering $\textbf{47}$ \textbf{percent} compared to the current deployed system and $\textbf{16}$ \textbf{percent} in comparison to SOTA. Given that the program runs on a scale of millions of beneficiaries nationwide under limited budget constraints, a reduction of $\textbf{16-47}$ \textbf{percent} implies freeing up critical bandwidth \textbf{to potentially enrol $\textbf{0.5-1.4}$ million more mothers}, disseminating critical health information and services to many more mothers from underprivileged communities who may otherwise have limited to no access to resources. We show a further $9$ percent reduction in attempts compared to SOTA needed to reach future new enrolments in the program, leveraging the learnt matrix factorization, freeing up even more bandwidth on a regular basis.  

Finally, due to the scale of the program and logistic constraints, sustained low listenership for several weeks in a row leads to beneficiaries being dropped off from the program. Despite this being an extremely hard metric to move, the proposed method is able to reduce these drop offs by $29$ percent over a period of $4$ months compared to the current deployment and by $7$ percent compared to the SOTA. On the scale of a program operating for $3$ million active subscribers currently, \textbf{an improvement of even $7$ percent translates to $\textbf{210,000}$ mothers being retained in the program} and continuing to benefit from access to valuable health information.



\section{Related Work}

\noindent \textbf{AI in Maternal Healthcare} 
Limited resource allocation problems in maternal healthcare have previously been solved by restless multi-arm bandits \cite{mate2022field, nair2022adviser, verma2023deployed}. The time slot selection problem using collaborative bandits was previously studied by \cite{pal2024improving}. 

\noindent \textbf{Collaborative Multi-armed Bandits}
Multi-armed bandits are a highly studied and effective method for solving several resource allocation problems. Several methods such as phased elimination \cite{lattimore2020bandit, slivkins2019introduction}, UCB \cite{auer2002finite}, Thompson Sampling \cite{thompson1933likelihood, agrawal2012analysis} and Best-arm Identification \cite{agrawal2020optimal, garivier2016optimal} have been studied in detail. The collaborative bandit problem has recently garnered attention due to the widespread popularity of modern recommender systems \cite{bresler2016collaborative, dadkhahi2018alternating}. Under special conditions, several algorithms with strong bounds have been proposed \cite{pal2023optimal, jain2022online}.
An algorithm which is applicable in approximate low rank was proposed by \cite{pal2024improving} which is close to our problem setting. In our work, we empirically compare against this work.


\noindent \textbf{Offline Bayesian Matrix Factorization.}
While several methods have been proposed for Bayesian matrix factorization \cite{nakajima2011theoretical}, the ones which utilize MCMC \cite{salakhutdinov2008bayesian} are of particular interest to us. \cite{ahn2015large} proposes utilizing SGLD in a distributed manner using block partitioning to perform matrix factorization. While such methods have been applied in practice \cite{zhang2020distributed, li2016exploiting}, none of these methods extend the solution in a bandit setting that forms our focus.

\noindent \textbf{Thompson Sampling \& Eluder Dimension.} Thompson Sampling (TS) is a widely used algorithm for bandit optimization, which maintains a prior distribution over unknown problem parameters and updates it sequentially as new data is observed~\cite{thompson1933likelihood}. It recommends actions by sampling from the posterior distribution.  In  a seminal work \citet{agrawal2012analysis} showed that TS obtains optimal regret for Multi-Arm Bandits (MAB). \citet{russo2013eluder} further extended these results, deriving an upper bound for TS's Bayesian-regret in terms of Eluder dimension, for general loss functions. Eluder dimension is a combinatorial quantity which measures the complexity of a function class. However, to the best of our knowledge, Eluder dimension has been characterized for very few function classes such as linear and quadratic~\cite{osband2014model}.


\section{Preliminaries}


\paragraph{Notation.} We use $\mathbb{R}$ to denote the set of real numbers. $\mathbb{R}^{d}, \mathbb{R}^{d_1\times d_2}$ denote the sets of $d$-dimensional vectors, and $d_1\times d_2$ dimensional matrices respectively. For a positive integer $n$, we use $[n]$ to denote the set $\{1, 2 \dots n\}.$ $\Delta_d$ denotes the probability simplex in $\mathbb{R}^d.$  
\paragraph{Problem Formulation.} 
Consider a mHealth program with $N$ beneficiaries (users), $M$ time slots (arms) and $T$ rounds of calling. In the following, we interchangeably refer to the beneficiaries as users and the time slots as arms to describe our techniques in a more general fashion.
Let $\Theta\in \bb{R}^{N\times M}$ be the (user, arm) reward matrix which is unknown to the system. $\Theta_{ij}$ denotes the expected reward of pulling arm $j$ for user $i$. We assume that $\Theta$ can be decomposed as $\Theta \coloneqq UV$, for some latent factors $U \in \mathbb{R}^{N\times C}, V \in \mathbb{R}^{C\times M},$ and $C \leq \min\{M, N\}.$  Special cases of this model include the rank-$1$ setting (where $C=1$), and the cluster setting (where each row of $U$ is a one-hot encoded vector representing user cluster membership). 

In each round $t\in [T]$, a user $u(t)\in [N]$ arrives at the system and is recommended an arm $a(t)\in [M]$. For ease of exposition, we assume the users arrive in a round-robin fashion; that is, $u(t) = t\% N$. At the end of round $t$, the system receives feedback in the form of reward from that user. Let $r_t\in [0,1]$ be the observed noisy reward which satisfies: $\mathbb{E}[r_t] = \Theta_{u(t)a(t)}.$ Note that $a(t)$ can depend on the history $H_t = \{u(s), a(s), r_s\}_{s=1}^{t-1}$.   Finally, we let $\pi$ be the prior distribution over the set of all rank $C$ matrices ($\pi$ encodes the prior knowledge about $\Theta$). The goal of the learner is to optimize Bayesian regret which is defined as
\begin{align}
    R(T; \pi) = \sum_{t=1}^T\mathbb{E}\left[\Theta_{u(t)*} - \Theta_{u(t)a(t)}\right],
\end{align}
where the expectation is taken over the prior distribution of $\Theta$, internal randomness of the system, and $\Theta_{u*}$ denotes the reward of the best item for user $u$.

\paragraph{Thompson Sampling.} As previously mentioned, TS is a widely used algorithm for bandit optimization. At any time step $t$, TS maintains a posterior distribution $\pi_t$ over the set of all rank $C$ matrices, representing its belief about the unknown reward matrix. TS samples a matrix $\hat{\Theta}^{(t)}$ from this distribution and recommends the arm with the highest predicted reward for the current user: $\arg\max_{j\in [M]} \hat{\Theta}^{(t)}_{u(t)j}$. Algorithm~\ref{alg:ts} describes this procedure.
\begin{algorithm}
  \caption{Thompson Sampling}
  \label{alg:ts}
\begin{algorithmic}
  \STATE {\bfseries Input:} number of rounds $T$, prior $\pi$
  \FOR{$t=1$ {\bfseries to} $T$}
  \STATE Sample reward matrix: $\hat{\Theta}^{(t)} \sim \pi_t$
  \STATE Observe user $u(t)$, recommend $a(t) \coloneqq \arg\max_{j\in [M]} \hat{\Theta}^{(t)}_{u(t)j}$
  \STATE Observe reward  $r_t$
  \STATE Update posterior: $\pi_{t+1}(\Theta') \propto \mathbb{P}(\Theta'|H_{t+1})$
  \ENDFOR
\end{algorithmic}
\end{algorithm}

\noindent In the next section, we develop a SGLD based algorithm for efficient posterior sampling in TS and demonstrate its superiority on a real-world maternal mHealth application ( Section~\ref{subsec:realworlddataset}). In Section~\ref{sec:bounds_cluster} we show that TS achieves optimal Bayesian regret guarantees for a cluster setting. 


\section{Proposed Method}
\label{sec:proposedmethod}

\subsection{Stochastic Gradient Langevin Dynamics (SGLD)}

SGLD \cite{welling2011bayesian} is a Markov Chain Monte Carlo (MCMC) method which is used for Bayesian inference and is particularly beneficial in high-dimensional settings. SGLD is able to perform updates in batches, similar to SGD in optimization. The update equation is as follows 
\begin{equation}
    \label{eq:sgld}
    \begin{split}
        \Delta\Theta &= \frac{\epsilon}{2}(\nabla \log{p(\Theta)} 
        + \frac{N}{n}\Sigma_{i=1}^n\nabla \log{p(X_{i} | \Theta})) + \eta \\
        \eta &\sim N(0,\epsilon)
    \end{split}
\end{equation}
where, $\Theta$ are the parameters that characterize the likelihood function of $X$ (observed data). $p(\Theta)$ is the Bayesian prior over these parameters. After an initial set of updates, SGLD generates samples from the posterior distribution of the model parameters, allowing full Bayesian inference instead of just the point estimates.

In this section we go through the steps to apply Thompson Sampling to the low-rank collaborative bandit problem. The basic idea is to utilize SGLD \cite{welling2011bayesian} to be able to perform Bayesian Matrix Factorization as well as posterior sampling.

\subsection{Bayesian Matrix Factorization}
\label{subsec:bmf}

As described in \cref{eq:sgld}, SGLD allows us to compute the updates to the parameters in batches of data. In our case, we assume that the matrices $U, V$ are derived from parameters $u, v$.
Also, in accordance with the assumption, $U_{i,c}$ is obtained as $\frac{e^{u_{i,c}}}{\sum_c' (e^{u_{i,c'}})}$ and each element  $V_{c,j}$ is filled with $\frac{1}{1 + e^{v_{c,j}}}$. The parameters $u_{i,c}, v_{c, j}$ are sampled from the exponential distributions sampled from independent priors each of which is an exponential distribution with pre-decided parameters $\lambda_{i,c}$ and $\alpha_{c,j}$ respectively. Note that the complete set of parameters is $\Theta = \{u_{i,c}\} \cup \{v_{c,j}\}$.

In our setting the prior $p(\Theta)$ from \cref{eq:sgld} is:
\begin{equation}
 \label{eq:prioruv}
   p(\Theta) =  p(u, v) = \left( \Pi_{i,c} \lambda_{i,c} e^{\lambda_{i,c} u_{i,c}} \right) \left( \Pi_{c,j} \alpha_{c, j} e^{\alpha_{c, j} v_{c,j}} \right)
\end{equation}
Coordinates of the gradient $\nabla_{\Theta} \log p(\Theta)$ are given by: 
\begin{equation}
\begin{split}
    \nabla_{u_{i,c}} \log{p(u, v)} = \lambda_{i, c},~
    \nabla_{v_{c, j}} \log{p(u, v)} = \alpha_{c, j}
\end{split}
\end{equation}

Let us assume that the data $X$ is composed of data points $x$ where $x_d = x_{i,j}$ is the reward of the $d_\text{th}$ data point for user $i$ for pulling arm $j$.
The likelihood (second) term of \cref{eq:sgld} is composed of the observed reward $x_{i, j}$ which is a mixture of Bernoulli random variables and is calculated as 
\begin{equation}
    \begin{split}
        Q &= p(x_{i,j} | u, v) = \Sigma_c \left( p(c|u_i) p(x_{i, j}|c, v_{j}) \right) \\
        p(c| u) &= \frac{e^{u_{ic}}}{\Sigma_k e^{u_{ik}}}, ~
        p(x_{i, j}|c, v_{j}) = \frac{x_{i, j}e^{v_{c, j}}}{(1+e^{v_{c,j}})} + \frac{(1-x_{i, j})}{(1+e^{v_{c,j}})}
    \end{split}
\end{equation}
Here, $p(c|u)$ and $p(x_{i, j}|c, v_{j})$ represent the probability of sampling archetype $c$ given the user parameters and the Bernoulli likelihood of observing the Boolean reward $x_{i, j}$ given the rank $c$ and the reward parameters respectively. 
Coordinates of matrix $\nabla_{\Theta} \log p(x_{i, j}|\Theta)$ are hence given by sum :
\begin{equation}
\label{eq:theta_upd}
        \nabla_{u_{i, c}} \log{p(x_{i, j} | u, v)}  = \frac{1}{Q}  (\frac{\delta U_{i,c}}{\delta u_{i,c}}) \cdot p(x_{i,j}|c, v_{j})
\end{equation}
\begin{equation}
\label{eq:r_upd}
    \nabla_{v_{c,j}} \log{p(x_{i,j} | u, v)} = \frac{1}{P} \frac{e^{u{i,c}}}{\Sigma_k e^{u_{i,k}}} \frac{(2x_{i,j} -1) e^{v_{c,j}}}{(1+e^{v_{c,j}})^2}
\end{equation}

\subsection{Thompson Sampling} \label{subsec:ts}

Thompson Sampling \cite{thompson1933likelihood}, \cite{nakajima2011theoretical} is a popular solution paradigm for multi-armed bandits as it provides strong theoretical guarantees. TS requires maintaining a belief, acting according in the best possible way according to the belief and updating it as new evidence is collected. The most difficult step involved is the sampling step and it can be done effectively by MCMC methods like SGLD as they converge to the posterior distribution and generate samples from it. In our problem, we perform the updates to the parameters $u,v$ and calculate $P = UV$ to be the final matrix, from which we choose the arm with the highest reward for each user.

\subsection{Scaling the Algorithm}

While SGLD is able to significantly speed up the posterior sampling process, the algorithm is still slow. In order to speed this up, we come up with a way to scale it in distributed settings. We observe that the updates to $u$ are independent across users given $v$, ie. updates of one user $u_i$ are independent from the updates of $u_j$ assuming $v$ to be fixed.


\begin{theorem}
\label{thm:condind}
    The updates in parameters for one user are independent from the other users.
    \begin{equation}
        P(u_k| v, u_{l\ne k}, X) = P(u_k| v, X)
    \end{equation}
\end{theorem}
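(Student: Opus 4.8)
The plan is to prove the claim by exhibiting an explicit factorization of the joint posterior $P(u,v \mid X)$ across users, so that after conditioning on $v$ the posterior over $u$ splits into a product of independent per-user factors. First I would write the posterior via Bayes' rule, $P(u,v \mid X) \propto p(u,v)\,\prod_{d} p(x_d \mid u,v)$, using the prior from \cref{eq:prioruv}.

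Second, I would isolate two structural facts. (i) The prior in \cref{eq:prioruv} already factorizes as $p(u,v) = \big(\prod_i p(u_i)\big)\, p(v)$, with $p(u_i) = \prod_c \lambda_{i,c} e^{\lambda_{i,c} u_{i,c}}$, so the blocks $u_i$ are a priori mutually independent given $v$. (ii) Each likelihood term $p(x_{i,j}\mid u,v) = \sum_c p(c\mid u_i)\,p(x_{i,j}\mid c,v_j)$ depends on $u$ only through the single block $u_i$. Grouping the observed data by the user it concerns, $X = \bigcup_i X_i$ (the round-robin schedule guarantees this is a clean partition, with $X_i$ the rewards observed for user $i$), the full likelihood factorizes as $\prod_d p(x_d\mid u,v) = \prod_i \prod_{x_{i,j}\in X_i} p(x_{i,j}\mid u_i,v)$.

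Third, combining (i) and (ii) gives $P(u \mid v, X) \propto \prod_i \Big[ p(u_i)\prod_{x_{i,j}\in X_i} p(x_{i,j}\mid u_i,v)\Big] =: \prod_i g_i(u_i; v, X_i)$, a product of factors over disjoint blocks. Hence the normalizing constant also factorizes, $\int \prod_i g_i\, du = \prod_i \int g_i(u_i;v,X_i)\,du_i =: \prod_i Z_i$, and $P(u\mid v, X) = \prod_i g_i(u_i;v,X_i)/Z_i$. Marginalizing out all $u_{l\ne k}$ leaves $P(u_k\mid v, X) = g_k(u_k;v,X_k)/Z_k$, which is exactly the expression one obtains after additionally conditioning on $u_{l\ne k}$, since those variables appear in no factor involving $u_k$. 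This yields $P(u_k\mid v, u_{l\ne k}, X) = P(u_k\mid v, X)$.

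The argument is essentially bookkeeping, so there is no serious obstacle; the one point requiring care is the role of $v$ — it is precisely conditioning on the shared latent factor $v$ that removes the only coupling between users, so the statement is false without it. I would also note explicitly that no likelihood term is shared across the per-user factors (this is where the partition $X=\bigcup_i X_i$ is used), and remark that the same factorization is what legitimizes the distributed/alternating SGLD scheme, since the gradient in \cref{eq:theta_upd} for block $u_i$ depends only on $X_i$ and $v$.
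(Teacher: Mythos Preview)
Your proposal is correct and follows essentially the same approach as the paper: both arguments rest on (i) the prior over $u$ factorizing across users and (ii) each likelihood term $p(x_{i,j}\mid u,v)$ depending on $u$ only through $u_i$, so that the posterior (given $v$) splits into per-user factors and the irrelevant factors cancel. Your presentation is slightly more systematic (you exhibit the full product factorization of $P(u\mid v,X)$ and then read off the conditional, whereas the paper starts from $P(u_k\mid v,u_{l\ne k},X)$ and cancels the non-$k$ likelihood terms directly), but the substance is the same.
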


\begin{proof}
Given data $X$, parameters $u,v$, the likelihood function is:
\begin{equation}
    L(X, u, v) = \prod_{k=1}^{|X|} \Bigl((1-X_{k(i,j)})(1-u_i v_j)
    + (X_{k(i,j)} u_i v_j) \Bigr)
\end{equation}
where $u_i$ is the $i_\text{th}$ row of $u$ and $v_j$ is the $v_\text{th}$ column of $v$.

We want to establish that the updates of the conditional posterior of the parameters of the $k_\text{th}$ user is independent of the parameters of the other users.
\[
    P(u_k| v, u_{l\ne k}, X) = \frac{L(u_k, r, \theta_{l\ne k}, X) p(u_k)}{\bigintss L(u_k, v, u_{l\ne k}, X) p(u_i) du} 
\]

We can separate out the items in $X$ where user $k$ is involved. 
\begin{align}
    L(&u_i, v, u_{l\ne k}, X) = L(u, v, X) \\
    =& \prod_{l=1}^{N} \left( (1-X_{l(i,j)})(1-u_i v_j) + (X_{l(i,j)} u_i v_j) \right) \\
    =& \left( (1-X_{k})(1-u_i v_j) + (X_{k} \theta_i v_j) \right) \\
     &\qquad  \prod \left( (1-X_{l\ne k})(1-u_{i'} v_{j'}) + (X_{l\ne k} u_{i'} v_{j'}) \right)
\end{align}

The second term in the product comes out in both numerator and denominator (as it is not dependent on $u_k$) and cancels out. Thus we are left with
\begin{align}
    P(u_k| v, u_{l\ne k}, X) &= \frac{L(u_k, v, X_k) p(u_k)}{\bigintss L(u_k, v, X_k) p(u_k) du}
    &= P(u_k| v, X_k)
\end{align}
Where $X_k$ is the data where $k$ is involved.
\end{proof}

\cref{alg:sgld_full} and \cref{alg:sgld_alternate} explain the full and alternating sampling SGLD methods. The complete algorithm is described by \cref{alg:proposed}.

\begin{algorithm}
  \caption{SGLD with Full Sampling}
  \label{alg:sgld_full}
\begin{algorithmic}
  \STATE {\bfseries Input:} Batch Size $n$, Data $X$
  \STATE {\bfseries Hyperparameters:} Learning Rate $\epsilon$
  \STATE {\bfseries Parameters:} $\lambda, \alpha$
  \STATE Initialize $u$ and $v$ 
  \REPEAT
  \STATE Select batch $x$ of size $n$ from $X$.
  \STATE Calculate terms from \cref{eq:theta_upd} and \cref{eq:r_upd}
  \STATE Update $u, v$ using \cref{eq:sgld}.
  \UNTIL{$u, v$ converge}
  \STATE {\bfseries Return} {$u, v$}
\end{algorithmic}
\end{algorithm}

\begin{algorithm}
  \caption{SGLD with Alternating Sampling}
  \label{alg:sgld_alternate}
\begin{algorithmic}
  \STATE {\bfseries Input:} Batch Size $n$, Data $X$, User Blocks $b$
  \STATE {\bfseries Hyperparameters:} Learning Rate $\epsilon$
  \STATE {\bfseries Parameters:} $\lambda, \alpha$
  \STATE Initialize $u$ and $v$ 
  \REPEAT
  \STATE Select batch $x$ of size $n$ from $X$.
  \FOR{$b_i$ {\bfseries in} b}
  \STATE Calculate terms from \cref{eq:theta_upd} for $b_i$
  \STATE Update $u_{b_i}$ using \cref{eq:sgld}.
  \ENDFOR
  \STATE Merge $u_{b_i}$ to get $u$.
  \STATE Calculate terms from \cref{eq:r_upd}
  \STATE Update $v$ using \cref{eq:sgld}.
  \UNTIL{$u, v$ converge}
  \STATE {\bfseries Return} {$u, v$}
\end{algorithmic}
\end{algorithm}

\begin{algorithm}
  \caption{ Proposed Algorithm \textbf{TS-SGLD}}
  \label{alg:proposed}
\begin{algorithmic}
  \STATE {\bfseries Input:} Time steps $T$, Samples per time step $s$.
  \STATE Generate data $X_0$ from random users and time slots.
  \FOR{$i=1$ {\bfseries to} $T$}
  \IF{Sampling Method == \textit{full}}
  \STATE $u, v$ from \cref{alg:sgld_full}
  \ELSIF{Sampling Method = \textit{alternating}}
  \STATE $u, v$ from \cref{alg:sgld_alternate}
  \ENDIF
  \STATE Calculate $U, V$ 
  \STATE Generate $X_i$ with $s$ samples using TS. 
  \ENDFOR
  \STATE {\bfseries Return} $U$, $V$
\end{algorithmic}
\end{algorithm}

\cref{thm:condind} allows us to compute updates in batches of users assuming $v$ to be constant. The batch updates to $u$ can be accumulated and then the updates to $v$ can be computed. The updates to $v$ also follow the conditional independence property, but the limiting factor in a majority of the cases is the number of users in $u$. These alternating updates are easier to perform than those in \cite{ahn2015large} as the recombination step is much easier.


\section{Theoretical Bounds - Clustered Low Rank Bandits}
\label{sec:bounds_cluster}

The general problem of low rank bandits in the Bayesian setup can be specified as follows. There is a space of low rank reward matrices that parameterized the problem given by: $\Omega_C \subset \mathbb{R}^{N \times D}$  such that $ \mathrm{rank}(R) = C, \forall R \in \Omega_C $. Here, we have $N$ users and $D$ is the dimension of the action space. Let the unknown reward matrix be $R$. We have a prior $\pi(R)$ over the space $\Omega$.

\textbf{Reward Likelihood Model:} For user $u$ and action vector $a \in \mathcal{A} \subset \mathbb{R}^D$, the reward model given $R$ is given by:
\begin{align} \label{reward_lkhood}
    Y(u,a) \sim R(u,:)^T a + N 
\end{align}
where $N$ is i.i.d $1$-sub-Gaussian random variable.  

\textbf{Bounded Parameter Space:} We will assume that $\lVert R(u,:)\rVert_2 < S, \forall R \in \Omega $, $\lVert a \rVert_2 < \gamma, \forall a \in \mathcal{A}$.

\textbf{Bayesian Regret:} Let user $u_t$ arrive at time $t$ and $a_t$ be the action chosen according to some bandit algorithm given previous actions $(u_1,a_1), \cdot \cdot (u_{t-1},a_{t-1})$ and the corresponding rewards $y_1 \ldots y_{t-1}$. We are interested in Bayesian regret given by:
\begin{align}
\mathbb{E}_{R \sim \pi, y(u_t,a_t) \sim Y(u_t,a_t)|R,u_t, \sim \mathrm{Unif}[1:N]}[ R(u_t,:)^T a_t - R(u_t,:)^T a^*(u_t)] \nonumber
\end{align}

Here, $a^*(u_t) = \arg \max \limits_{a \in {\mathcal A}} R(u_t,:)^T a$

We analyze the Bayesian Regret for the natural Thompson Sampling algorithm with perfect Gibbs Sampling from the posterior at every step.

\textbf{Thompson Sampling:} At every time $t$, let the history of users, actions and rewards be denoted $H_t$  given user $u_t$, compute posterior $\pi(R|H_t)$. Sample $\hat{R} \sim \pi(R|H_t)$. Play action $a_t= \arg \max \limits_{a \in \mathcal{A}}\hat{R}(u_t,:)^T a$.

The basis for our regret bounds is the seminal result of \citet{russo2013eluder} which upper bounds the Bayesian regret of TS in terms of Eluder dimension, a measure of the complexity of a model class.  Specifically, for a model class $\Omega_{C} = \{UV:  U \in \mathbb{R}^{N\times C}, V \in \mathbb{R}^{C\times D}\}$, the Eluder dimension is defined as the maximum length of a sequence of actions that consistently provide new information about the underlying function. Formally, an action $u_t,a_t$ is said to be $\epsilon$-dependent on previous action sequence $\{(u_1,a_1), \ldots (u_{t-1},a_{t-1})\}$ if any pair of matrices $R, R'\in \Omega_{C}$ satisfying $\sqrt{\sum_{s=1}^{t-1}((R_{u(s),:} -R'_{u(s),:})^T a_s) ^2} \leq \epsilon$ also satisfies: $|( R'_{u(t),:} - R_{u(t),:})^Ta_t|\leq \epsilon$. $u_t,a_t$ is $\epsilon$-independent of $\{(u_1,a_1), \ldots (u_{t-1},a_{t-1})\}$ if it is not $\epsilon$-dependent on it. 
\begin{definition}[\citet{russo2013eluder}] The $\epsilon$-Eluder dimension $\text{dim}_E(\Omega_{C}, \epsilon)$ is the length of longest sequence of elements in $[C]\times [D]$ such that for some $\epsilon' \geq \epsilon$, every element is $\epsilon'$-independent of its predecessors.
\end{definition}
\noindent\citet{russo2013eluder} show that the Bayesian regret of TS is upper bounded (upto logarithmic factors) by
\[
\sqrt{\text{dim}_E(\Omega_{C}, T^{-2}) \log N(\Omega_{C}, T^{-2}, \|\cdot\|_{\infty})T},
\]
where $N(\Omega_{C}, \epsilon, \|\cdot\|_{\infty})$ is the covering number of the model class. We see that $\epsilon$-Eluder dimension is the key complexity measure that governs regret in bandit problems. Our main contribution is characterization of $\epsilon$-Eluder dimension for the cluster model below which is a special case of low rank bandits with infinite arms. We describe the simpler model below.

\textbf{Cluster Model with infinite arms:} We further consider $\Omega_C$ with a clustering assumption that is there are only $C$ distinct rows in $R $ and every user $u$ is mapped to a cluster using a cluster assignment function $c:[N] \rightarrow [C]$. Therefore, one can reparameterize  $\Omega_C = (c,R),~c:[N] \rightarrow [C],~ R \in \mathbb{R}^{C \times D}$. $R(u,:) = R(c(u)
,:)$ for the reward model \eqref{reward_lkhood}.

We will show that the $\epsilon-$ Eluder dimension for the above model is :
$\mathrm{dim}_E(\Omega_C, \epsilon) = \mathcal{O}(2CD \log (1+2S/\epsilon^2) + CN)$. This is our key theoretical contribution.

While our main contribution is the $\epsilon$- Eluder dimension characterization of the space of reward matrices for cluster model with infinite arms, we first establish $\epsilon$- Eluder dimension of the finite arms case before proceeding to the general case as this is very insightful and simpler to understand.

\textbf{Cluster Model with finite arms:} We consider a simpler cluster problem where the action set ${\mathcal A}=\{e_1, e_2 \ldots r_D\}$ consists of indicator vectors. We denote this as the cluster model with finite arms.

\begin{theorem} In the cluster model, for $C$ user-clusters with a total of $N$ users and $D$ finite arms $D$ arms, the $\epsilon$-Eluder dimension is at most $(2D+N)C$.
\end{theorem}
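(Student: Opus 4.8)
Fix a sequence of actions $(u_1,e_{j_1}),\dots,(u_L,e_{j_L})$ in which every element is $\epsilon'$-independent of its predecessors for some $\epsilon'\ge\epsilon$; the goal is to show $L\le(2D+N)C$, since the $\epsilon$-Eluder dimension is by definition the largest such $L$. Under a parameter $(c,\rho)$ with $c:[N]\to[C]$ and $\rho\in\mathbb{R}^{C\times D}$, the mean reward of $(u,e_j)$ is the single entry $\rho(c(u),j)$, so for each step $t$ independence supplies a pair $(c,\rho),(c',\rho')$ with $\sum_{s<t}\big(\rho(c(u_s),j_s)-\rho'(c'(u_s),j_s)\big)^2\le\epsilon'^2$ and $\big|\rho(c(u_t),j_t)-\rho'(c'(u_t),j_t)\big|>\epsilon'$. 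I would split the steps into two groups according to the kind of witness available: call $t$ a \emph{reward step} if some certifying pair has $c(u_t)=c'(u_t)$ (the ambiguity lies only in the reward entries), and a \emph{cluster step} otherwise (every certifying pair disagrees on the cluster of $u_t$). Bounding these groups by $2CD$ and $NC$ respectively gives the theorem.

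\textbf{Reward steps.} Here the certifying pair shares the assignment of $u_t$, say to cluster $a$, and differs on the entry $\rho(a,j_t)$. The key observation is that $j_t$ must be ``fresh'' for cluster $a$ along that pair's assignment: if some earlier $s<t$ had $c(u_s)=a$ and $j_s=j_t$, then the $s$-th term of the past-sum already contains $\big(\rho(a,j_t)-\rho'(a,j_t)\big)^2$, forcing $|\rho(a,j_t)-\rho'(a,j_t)|\le\epsilon'$ and contradicting independence at $t$. Since the witnessing assignment may reshuffle users between clusters from one step to the next, a fixed entry $(a,j)$ can be the site of ambiguity more than once, but each such re-contest must be accompanied by a user entering or leaving cluster $a$ in the witness; accounting for the two relevant directions bounds the number of reward steps by $2CD$.

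\textbf{Cluster steps.} Fix a user $u$ and look only at the cluster steps with $u_t=u$. The same fresh-arm argument, applied to the rows for $u$'s two candidate clusters, shows that the arms $j_t$ at these steps are pairwise distinct. I would then track, as the sequence progresses, the partition of $[C]$ induced on $u$ by ``which clusters are still jointly consistent, to within $\epsilon'$, with the sub-sequence of past actions at $u$''. Each cluster step at $u$ exhibits two clusters that lie in a common block just before $t$ but are separated by the constraint added at $t$, hence strictly refines this partition; a partition of a $C$-element set can be refined at most $C-1$ times, so there are fewer than $C$ cluster steps per user and at most $NC$ overall. Summing the two bounds yields $L\le 2CD+NC=(2D+N)C$.

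\textbf{Main obstacle.} The delicate point is the cluster-step count. Unlike the textbook determinant-potential Eluder argument for linear classes, the per-step constraints here are only approximate ($\le\epsilon'$) and are coupled across \emph{all} users through a single global assignment, so the intuitive slogan ``each cluster step for $u$ eliminates one more cluster for $u$'' must be made into an honest monovariant -- e.g.\ the consistency-partition refinement above -- and one has to verify that refinement really is forced, and that the reward-step and cluster-step analyses are not counting the same ambiguity twice. A natural fallback is to mimic the Russo--Van Roy determinant potential on the $CD$-dimensional indicator embedding, but the feature vector of an action now depends on the unknown assignment, so the potential would have to range over all $C^{N}$ embeddings at once; this is exactly where a naive union bound loses, and the $CN$ term is the price of not knowing the clustering.
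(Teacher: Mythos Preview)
Your high-level decomposition into ``reward steps'' versus ``cluster steps'' matches the paper's case (a)/case (b) split, and the target totals $2CD$ and $NC$ are the same. The crucial device you are missing, however, is that the paper \emph{anchors the entire argument to a single fixed parameter} $(c^{*},R^{*})$ --- the unique element surviving in the final equivalence class $\mathcal{F}^{\epsilon}_K$ --- and at each step $k$ picks some $(c_2,R_2)$ that has just been separated from it. The dichotomy is then simply whether $c_2(u_k)=c^{*}(u_k)$ or not. With this anchor, case (a) is immediately bounded by $CD$: the pair $(c^{*}(u_k),a_k)$ is determined by the \emph{fixed} assignment $c^{*}$, so once a $(\text{cluster},\text{arm})$ pair repeats no new separation is possible. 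Your reward-step argument tries to run the same ``freshness'' logic along the \emph{witness's} assignment, but since the witnessing pair may change arbitrarily from step to step, the phrase ``a user entering or leaving cluster $a$ in the witness'' has no well-defined referent, and the $2CD$ bound is not established.

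The anchor is also what rescues the cluster-step count. The paper does not use a partition-refinement monovariant; instead, for case (b) it asks whether the information ``$c^{*}(u_k)\neq c_2(u_k)$'' is genuinely new. If it is (sub-case b1), that can happen at most $N(C-1)$ times because $c^{*}$ is fixed and each user has only $C-1$ wrong labels to rule out. If it is not new (sub-case b2), the paper exhibits an earlier step where two reward matrices (both still consistent with $(c^{*},R^{*})$ at that time) first disagree by more than $\epsilon$ on some $(c',a)$ entry; this is again a case-(a)-type event and contributes at most $CD$ more. Your partition idea is the right intuition, but as you yourself flag, the ``jointly consistent'' relation over varying witnesses need not be transitive, so it does not define a partition, and refinement is not forced. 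Fixing the anchor $(c^{*},R^{*})$ is precisely the move that turns the slogan into an honest monovariant.
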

\begin{proof}[Proof Sketch]
 We first take a specific realization of $(c,R)$ and with respect to a sequence of actions look at the reward sequence which is $\epsilon$ Eluder. For every new action, some subset of alternate possible reward matrices get separated from this by taking a value $\epsilon$ far until only one is left. We argue that while separation happens you either learn something about the assignment function $c$ for some $c(u)$ or learn something about entry in $R$. We then argue that corresponding Eluder subsequences where one or the other is learnt has lengths at most $CN$ and $CD$ respectively. Detailed formal argument is presented in the supplement.
\end{proof}

\begin{theorem}\label{infinite_thm} In the cluster model, for $C$ user-clusters with a total of $N$ users and $D$-dimensional infinite arm set $\mathcal{A}$, the $\epsilon$-Eluder dimension is $O(2CD \log (1+2S/\epsilon^2) + CN))$.
\end{theorem}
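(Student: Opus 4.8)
The plan is to extend the finite-arm argument of the preceding theorem by replacing the per-coordinate accounting of $R$ with an elliptical-potential accounting for the $C$ cluster reward vectors, while keeping the combinatorial accounting for the cluster-assignment function $c$ essentially unchanged. Concretely, I would fix a realization $(c^\star,R^\star)\in\Omega_C$ and an $\epsilon'$-Eluder sequence $(u_1,a_1),\ldots,(u_L,a_L)$ with $\epsilon'\ge\epsilon$, and at each step $t$ record a witnessing pair $(c^{(t)},\rho^{(t)}),(\tilde c^{(t)},\tilde\rho^{(t)})\in\Omega_C$ that agrees up to $\epsilon'$ in the prefix seminorm $\sqrt{\sum_{s<t}(\cdot)^2}$ yet disagrees by more than $\epsilon'$ on $(u_t,a_t)$. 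The key structural observation, exactly as in the finite-arm proof, is that disagreement on a fresh action must come from one of two sources: either the two candidate models assign the active user $u_t$ to different clusters (a \emph{$c$-learning} step), or they agree on the cluster of every previously-seen user but differ in some cluster's reward vector restricted to the span of the actions already played for that cluster (an \emph{$R$-learning} step). I would formalize this with a version-space argument: maintain the set of models consistent with the prefix within $\epsilon'$, and show each Eluder step strictly refines it in one of these two directions.

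For the $c$-learning steps I would reuse the finite-arm bound verbatim: once a user $u$ has been separated from a candidate cluster label $\ell\neq c^\star(u)$ — i.e. no consistent model assigns $u$ to $\ell$ any more — no later step can be $\epsilon'$-independent on account of that confusion; since there are $N$ users and at most $C-1$ wrong labels each, there are at most $N(C-1)=O(CN)$ such steps, and this part does not use finiteness of the arm set. For the $R$-learning steps I would partition them by which cluster $k\in[C]$ carries the disagreement and restrict to that cluster's subsequence of actions $a_{t_1},a_{t_2},\ldots$ (all played by users that the consistent models agree lie in cluster $k$). Along this subsequence the Eluder condition becomes precisely the linear Eluder condition for the $D$-dimensional class $\{\rho\in\mathbb{R}^D:\|\rho\|_2<2S\}$ (using $\|\rho^{(t)}-\tilde\rho^{(t)}\|_2<2S$) with actions of norm $<\gamma$: each step exhibits $\rho,\tilde\rho$ small on the prefix Gram matrix but with $|(\rho-\tilde\rho)^\top a_{t_j}|>\epsilon'$. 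By the elliptical-potential lemma — equivalently \citet{russo2013eluder}'s bound on the $\epsilon$-Eluder dimension of linear classes — the length of such a subsequence is $O(D\log(1+2S/\epsilon^2))$; summing over the $C$ clusters gives $O(CD\log(1+2S/\epsilon^2))$, and adding the two contributions yields the claimed $O(2CD\log(1+2S/\epsilon^2)+CN)$.

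The main obstacle I anticipate is making the dichotomy "$c$-learning versus $R$-learning" watertight and non-overlapping, because a single witnessing pair can differ from the realization in both $c$ and $R$ at once and the witnessing pair changes from step to step. The clean way around this is to attribute each step to the first direction in which the version space genuinely shrinks: if some consistent model still assigns $u_t$ to a label it could previously, the step is charged to $c$-learning for $u_t$; otherwise all remaining models agree on $u_t$'s cluster $k$ and the step is charged to $R$-learning for cluster $k$. One then checks that, with this attribution, the actions charged to a fixed cluster $k$ really do form a genuine linear Eluder sequence — in particular that dropping the actions of other clusters does not weaken the prefix seminorm bound, which holds because once the assignments of the seen users are pinned down the other clusters' actions contribute zero to the disagreement. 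A secondary, purely technical point is handling the slack $\epsilon'\ge\epsilon$ uniformly and the constant/normalization bookkeeping so that $\gamma$ and the factor $2$ in $\|\rho-\tilde\rho\|_2<2S$ land inside the logarithm in the stated form.
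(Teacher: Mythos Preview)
Your proposal is correct and follows essentially the same approach as the paper: split the Eluder steps into those that learn the cluster assignment $c$ (bounded by $O(CN)$ via the same combinatorial argument as in the finite-arm case) versus those that learn the cluster reward vectors, and for the latter partition by cluster and invoke the linear-bandit Eluder bound (Proposition~6 of \citet{russo2013eluder}) to get $O(D\log(1+2S/\epsilon^2))$ per cluster. The paper's factor of $2$ comes from an explicit sub-case (its case~(b2), precisely your anticipated obstacle of a witnessing pair differing in both $c$ and $R$ simultaneously) that it charges as a second $R$-learning-type contribution; this is absorbed in your big-$O$ and your version-space attribution handles the same issue, so the arguments align.
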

\begin{proof}[Proof Sketch]
It follows the finite arms case except that we use a reduction to linear bandits to argue that the Eluder subsequence where new information about $R$ is learnt cannot exceed $\tilde{O}(CD \log (1/\epsilon))$. Exact formal argument in given in the supplement.
\end{proof}

\begin{theorem}
For the cluster model in the infinite arm case, we have:

a) the log-covering number $\log N(\Omega_C, \epsilon, ||.||_\infty) = O(CD \log(1/\epsilon)+ N \log C)$ and,

b) $\mathrm{dim}_E(\Omega_C, \epsilon) = \mathcal{O}(2CD \log (1+2S/\epsilon^2) + CN)$ (Theorem \ref{infinite_thm})

Thus propositions 4 and 5 from \cite{russo2013eluder} with $\epsilon = T^{-2}$ yields a Bayesian regret bound of:

\[\tilde{O}\left(CD\sqrt{T\left(1+\frac{N}{D}\right)\left(1+\log(C)\frac{N}{CD}\right)}\log(T)\right)\] .
\end{theorem}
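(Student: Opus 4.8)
The plan is to assemble the stated regret bound from three pieces: a covering-number estimate for the cluster model with infinite arms, the Eluder-dimension bound already established in \Cref{infinite_thm}, and the generic Bayesian-regret bound for Thompson sampling of \citet{russo2013eluder}. Part (b) is literally \Cref{infinite_thm}, so the only genuinely new work is part (a), after which part (c) is a substitution followed by some logarithmic bookkeeping.

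\textbf{Covering number (part a).} I would parameterize a member of $\Omega_C$ as a pair $(c,R)$ with $c:[N]\to[C]$ and $R\in\mathbb{R}^{C\times D}$ satisfying $\lVert R(k,:)\rVert_2\le S$ for every cluster $k$, and identify it with the reward function $f_{c,R}(u,a)=R(c(u),:)^\top a$ over $u\in[N]$ and $a\in\mathcal{A}$, $\lVert a\rVert_2\le\gamma$. For the $\|\cdot\|_\infty$ cover I would keep all $C^N$ cluster-assignment functions exactly, and for each of the $C$ cluster rows use a standard Euclidean $\delta$-net of the radius-$S$ ball, of cardinality at most $(1+2S/\delta)^D$. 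By Cauchy--Schwarz, for a fixed assignment $c$ one has $\lVert f_{c,R}-f_{c,R'}\rVert_\infty=\gamma\max_u\lVert R(c(u),:)-R'(c(u),:)\rVert_2$, so taking $\delta=\epsilon/\gamma$ yields an $\epsilon$-cover of size $C^N(1+2S\gamma/\epsilon)^{CD}$; treating $S,\gamma$ as absolute constants gives $\log N(\Omega_C,\epsilon,\|\cdot\|_\infty)=O\!\big(N\log C+CD\log(1/\epsilon)\big)$, which is part (a).

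\textbf{Assembling the bound (part c).} With resolution $\epsilon=T^{-2}$, \Cref{infinite_thm} gives $\mathrm{dim}_E(\Omega_C,T^{-2})=O\!\big(CD\log(1+2ST^4)+CN\big)=O\!\big(CD(1+N/D)\big)$ up to the $\log T$ hidden in the first term, and part (a) gives $\log N(\Omega_C,T^{-2},\|\cdot\|_\infty)=O\!\big(CD\log T+N\log C\big)=O\!\big(CD(1+\tfrac{N\log C}{CD})\big)$ likewise. Propositions~4 and~5 of \citet{russo2013eluder} bound the Bayesian regret of Thompson sampling, up to the additive lower-order $\mathrm{dim}_E\cdot C$ term and logarithmic factors, by $\sqrt{\mathrm{dim}_E(\Omega_C,T^{-2})\cdot \log N(\Omega_C,T^{-2},\|\cdot\|_\infty)\cdot T}$. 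Substituting the two displays, pulling the common factor $CD$ out of the square root, and collecting the $\log T$ factors gives
\[
\tilde{O}\!\left(CD\sqrt{T\left(1+\tfrac{N}{D}\right)\left(1+\log(C)\tfrac{N}{CD}\right)}\log(T)\right),
\]
as claimed.

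\textbf{Main obstacle.} The delicate step is the covering-number argument: one must handle the product structure of $\Omega_C$ (a finite combinatorial component for $c$ times a continuous component for $R$) inside a single $\|\cdot\|_\infty$ net, and must collapse the supremum over the \emph{infinite} action set $\mathcal{A}$ to a dual-norm computation so that the $\ell_2$-ball nets on the rows actually control it. The rest is bookkeeping: checking that the resolutions demanded by Propositions~4 and~5 of \citet{russo2013eluder} are consistent with the single choice $\epsilon=T^{-2}$ used for both $\mathrm{dim}_E$ and the covering number, and that the $\log(1+2S/\epsilon^2)$ term appearing in \Cref{infinite_thm} collapses to $O(\log T)$, so that the product telescopes exactly to the advertised form.
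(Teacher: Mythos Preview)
Your proposal is correct and follows the same three-step structure as the paper: cite \Cref{infinite_thm} for part~(b), build the $\|\cdot\|_\infty$ cover by enumerating all $C^N$ assignment functions and netting the $C\times D$ reward matrix, then plug both bounds at resolution $\epsilon=T^{-2}$ into Propositions~4--5 of \citet{russo2013eluder}. If anything, your covering-number argument is more careful than the paper's: you explicitly pass through Cauchy--Schwarz to collapse the supremum over the infinite action set $\mathcal{A}$ to an $\ell_2$ distance on rows, whereas the paper simply asserts an entrywise $O(1/\epsilon)^{CD}$ net for bounded matrices without addressing the action supremum.
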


\begin{proof}
 Theorem \ref{infinite_thm} shows that $\mathrm{dim}_E(\Omega_C, \epsilon) = \mathcal{O}(2CD \log (1+2S/\epsilon^2) + CN)$. In the clustered bandits case with infinite arms, the reward matrix has two components, $R \in \mathbb{R}^{C \times D}$ and an assignment function $c:[N] \rightarrow C$. If we want to cover space of matrix $R$ in infinity norm, then we would exactly cover all assignment functions and there are $C^N$ in number and for every $R$ we would find an approximation which is $\epsilon$ far in the infinity norm. The latter $\epsilon$ cover would involve $O(\frac{1}{\epsilon})^{CD}$ matrices assuming a bounded space of $C \times D$ matrices. Therefore, the covering number is $O(\frac{1}{\epsilon})^{CD} * C^N$. This yields the log-covering number bound.Combining both by results in \cite{russo2013eluder} gives the regret bound.
 \end{proof}
\textbf{Remark:} Assuming every user arrival (random uniform arrival) is treated as an independent linear bandit, regret would be roughly $\tilde{O}(D\sqrt{NT})$. When $N \approx D$ (i.e. $N$ and $D$ are comparable), our regret bound is effectively $ \tilde{O}(CD \sqrt{T})$ ignoring $\log$  and constant factors where only the number of clusters and $D$ matters. Usually $C$ is a constant while $N$ and $D$ are very high dimensional. In this case, our regret bound gives a polynomial advantage over treating each user as an independent bandit. When $N>>D$, we point out that we have an extra factor $\sqrt{(1+ \log C \frac{N}{CD})}$ in regret that appears primarily because of the log covering number dependence in regret. We believe that a tighter analysis connecting regret and Eluder dimension is needed in this specific clustering setting to resolve this.


\section{Experiments}
\label{sec:experiments}
\subsection{Baselines}

We show the efficacy of the proposed algorithm \textbf{TS-SGLD} (Algorithm~\ref{alg:proposed}) by comparing against other methods performing low-rank collaborative bandits. 
We compare our method against the SOTA Phased Matrix Completion (Phased MC) and Greedy Matrix Completion (Greedy MC) from \cite{pal2024improving}. Other baselines include Alternating Linear Bandits (AltMin) from \cite{dadkhahi2018alternating} and LATTICE from \cite{pal2023optimal}, where the latter is specifically for comparison on the simulated cluster setting. We also consider UCB on individual arms and a random baseline which is currently deployed by the NGO for the real world maternal health application. Results are shown for the best finetuned hyperparameters for all baselines to give a fair comparison.

\subsection{Simulated Data}

The simulated dataset is created by generating random matrices of size $N \times M$ with rank $C$ by multiplying two random matrices $U$ and $V$ of shape $N \times C$ and $C \times M$ respectively and adding Gaussian noise to it. This is followed by a normalization step which scales the values to probabilities between $[0,1]$.
$$
P = \textit{normalize}(U \cdot V + N(\mu, \sigma^2))
$$
The matrices $U,V$ are randomly generated with each element being from the range $[0, 1]$ and $\mu$ and $\sigma$ are chosen to be $0.5$ and $0.1$ respectively. The \textit{normalize} operation maps the smallest and highest values to $0$ and $1$ and scales the other values accordingly. $C$ is set to $4$ for these experiments. Uniform prior of 0.5 is applied to all parameters in Equation~\ref{eq:prioruv}.

\begin{figure}
    \begin{subfigure}{.35\textwidth}
        \begin{center}
        \centerline{\includegraphics[width=\columnwidth]{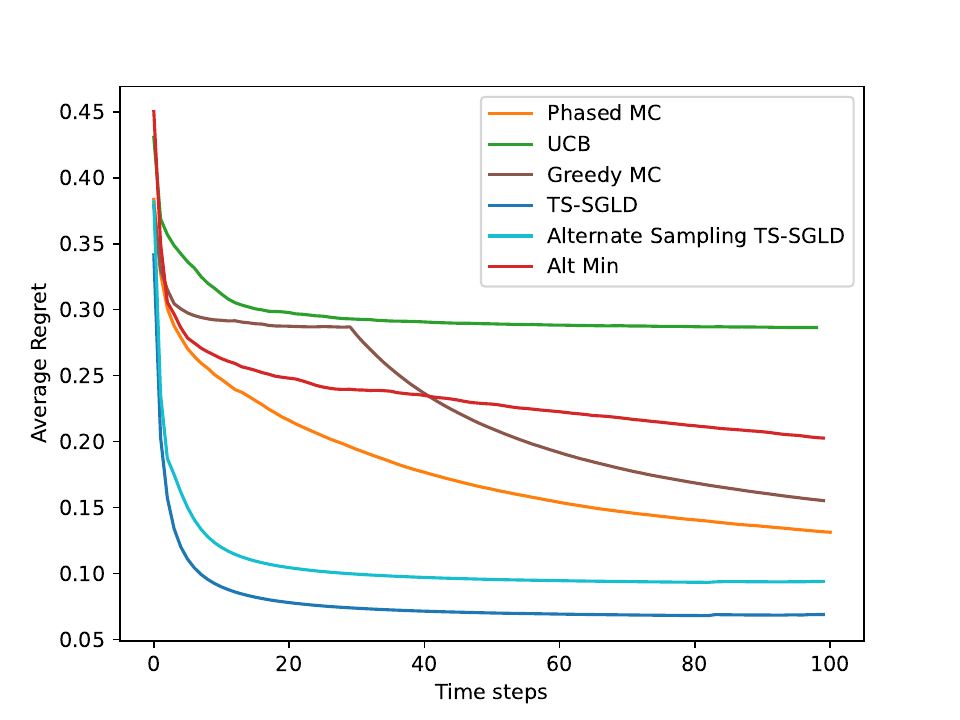}}
        \end{center}
    \end{subfigure}
    \hspace{3mm}
    \vspace{-0.15in}
    \caption{Regret for the low-rank case on simulated data: Average regret for different methods averaged over $15$ random matrices. Results are on a $1000$ users and $20$ arms matrix with $4$ user types. Every time step adds $1000$ samples.}
    \label{fig:regret_low_rank_simulated}
    \vspace{-0.15in}
\end{figure}

 \cref{fig:regret_low_rank_simulated} compares regrets of different methods for $1000$ users and $20$ arms over $15$ randomly generated low-rank matrices.
 UCB gives the highest regret since it cannot leverage the collaborative structure of the arm rewards. Due to the low-data setting, \textbf{Greedy MC} takes more exploration steps to catch up with \textbf{PhasedMC}. \textbf{GreedyMC} is primarily an explore-then-exploit approach which is prohibitive since sub-optimal choices at the start can be costly, especially in the subsequent real-world application.
\textbf{TS-SGLD} with full sampling performs about $\textbf{~65\%}$ better than \textbf{PhasedMC}. \textbf{TS-SGLD} with alternate sampling has higher regret than \textbf{TS-SGLD} with full sampling since alternating sampling leads to a noisier convergence to the posterior, but still has lower regret compared to all other baselines. Similar results are also shown for the special cluster setting described in  Section~\ref{sec:bounds_cluster} in \cref{fig:regret_cluster_simulated} in the Appendix. LATTICE \cite{pal2023optimal} is known to provide optimal bounds for the cluster case in theory. However in practicality, in the low data regimes where there is a small amount of data coming in at each time step, the regret of LATTICE is worse than \textbf{TS-SGLD} with full sampling by $15$ percent in final cumulative regret. 

\subsection{Real World Dataset from Maternal mHealth program}
\label{subsec:realworlddataset}

\textbf{Dataset}: The problem of optimal time slot selection for beneficiaries in a mHealth program can be formulated as a collaborative bandits problem \cite{pal2024improving}. The dataset is created from historic data from the Kilkari program where anonymized call logs provide information about which user was called at which time slot, and whether the call was answered to provide a groundtruth pickup matrix (consisting of $\textbf{200,000}$ users observed over one year for $7$ time slots daily between $8am$ and $8pm$) and 
a test dataset corresponding to a different set of $\textbf{1000}$ users. For the experiments, timeslots are discretized into $7$ and $14$ slots 
where $14$ slots correspond to different slots for weekdays and weekends. $14$ slots increases the problem complexity, however it is more representative of this domain given that many users have limited access to shared phones on weekdays and may have different preferences for weekends.

\textbf{Prior}: Prior information is obtained from previous iterations of the calling program. 
It was observed that the real world dataset for a significant number of users contains zero entries in the pickup matrix i.e. zero pick-up rates across most time slots. This was provided as a prior on the $V$ matrix on the second term in \cref{eq:prioruv}. No priors were provided on the $U$ matrix, since there is no demographic or any other meaningful information available about the users to provide meaningful priors on the user type categorization matrix. However, the method allows for priors to be incorporated on both $U$ and $V$ matrices if such information were available.

The method requires that we choose an appropriate rank $C$ which fits the data reasonably. In all of our experiments for this dataset, $C=5$ was chosen, based on experiments shown in Figure ~\ref{fig:regret_rank} in the Supplementary material.
Users are split into different buckets based on largest pick-up probability across time slots to analyze the performance of the algorithms finely. Beneficiaries with highest pick-up rate (across any time slot) below $0.2$ and above $0.8$ are classified into the low and high pick-up rate groups respectively. The remaining users are classified into the middle pick-up rate group. On a dataset of $1000$ users, $488$ users fall in this category.

\textbf{Impact on Regret}: \cref{fig:regret_middle_cum} compares average regret on the medium listenership bucket of users across different baselines, and and \cref{fig:regret_1} (in supplementary) shows regret across all users. \textbf{TS-SGLD} performs particularly well in low-data settings such as these. \textbf{TS-SGLD} with prior, is able to effectively utilize prior information and outperforms \textbf{TS-SGLD} without prior by $8$ percent as well as \textbf{PhasedMC} and \textbf{AltMin} by $14$ percent and $21$ percent respectively. Despite the extremely low pickup and high pickup users being harder to impact, \textbf{TS-SGLD} improves these by $4$ and $8$ percent respectively compared to SOTA.

\textbf{Impact on re-attempts}:
Since a lot of bandwidth is currently spent on re-attempts to increase likelihood of pickup across users, we also measure the expected number of attempts required by the different methods. The program is known to implement a constraint of $\le 9$ calls \cite{mohan2021can}
The average number of attempts made for connecting a call is ~$5$ \cite{mohan2021can}. \cref{fig:attempts_middle} shows the average number of attempts made for the beneficiaries in the middle pick-up rate group (~$49\%$ of beneficiaries) reduces by $16\%$ compared to the SOTA and $46\%$ compared to the current deployed system.

\textbf{Weekday vs Weekend slots}: Figure~\ref{fig:attempts_weekday} shows the number of attempts reduces significantly compared to the SOTA and non-collaborative baselines, when $14$ slots are used instead of $7$ slots to take into account additional preference for weekday vs weekends.

\textbf{Impact on new enrollments}:
Since the program enrolls new beneficiaries at regular intervals,  \cref{fig:regret_new} and \cref{fig:attempts_new} show the performance of the algorithm for new users after the algorithm has been running on an existing cohort for $10$ timesteps for both the algorithms.
The regret achieved is significantly lower for \textbf{TS-SGLD} compared to \textbf{PhasedMC} by $18$ percent and translates to a further $9$\% reduction in the number of attempts to connect calls to the new users.
In fact, due to the $UV$ factorization, the algorithm only has to learn the $U$ matrix entries for the new users, and can leverage the already known $V$ matrix to infer slots sooner. Since these mHealth programs suffer from a high rate of drop off \cite{ARMMAN_kilkari} due to prolonged low listenership, it is important not only to learn optimal time slots accurately but also quickly with a few number of samples to avoid risk of drop off from the program.

\textbf{Impact on Drop offs}:
Currently, beneficiaries with low listenership (< $25\%$ of message length) for 6 consecutive weeks or low listenership for 9 weeks within a 16 week period are dropped off from the program. It is therefore imperative to send calls at a convenient time to both boost listenership and retain beneficiaries in the program to ensure continued access to critical health information. Despite this being a very strict metric and considerably harder to impact, \textbf{TS-SGLD} manages to reduce dropoffs by $7$ percent compared to the SOTA and by $29$ percent compared to the current deployed system (Figure~\ref{fig:dropoffs}).

\textbf{Combined Pickup and Engagement data}:
Similar to \cite{pal2024improving}, pick-up and engagement data was combined to minimize the regret for engagement. 
A call is said to be \textit{engaged} if listenership is greater than $25\%$ of the message length. \textbf{TS-SGLD} on concatenated pickup and engagement data shows $17$ percent improvement (Figure \ref{fig:attempts_pe}) in re-attempts over SOTA on medium-bucket listenership users.

\textbf{Simulated Data vs Real World Data Gap}:
The difference in performance of the algorithm
between the simulated and real-world data is due to the increased amount of unstructured noise in real datasets. Furthermore, the low-rank assumption in the real-world dataset does not hold completely, and in fact the matrix is approximately low-rank, as discussed in Supplementary Section~\ref{sec:properties}.


\begin{figure*}
    \begin{subfigure}{.33\textwidth}
        \begin{center}
        \centerline{\includegraphics[width=\columnwidth]{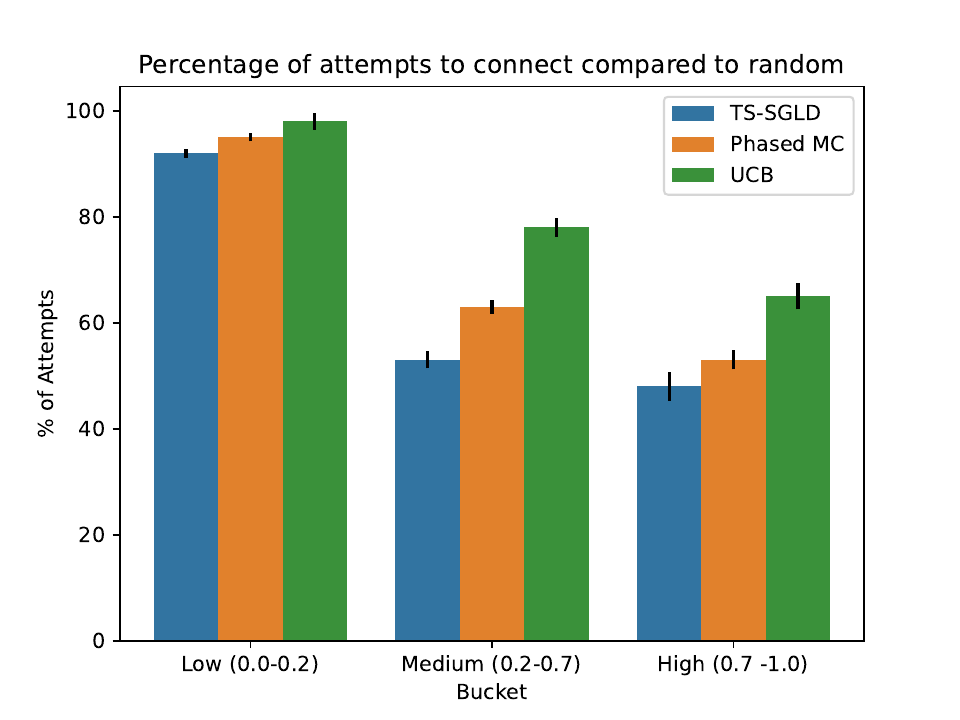}}
        \caption{Number of attempts for connecting $1000$ calls for $1000$ users using pickup data}
        \label{fig:attempts_middle}
        \end{center}
    \end{subfigure}
    \hspace{3mm}
    \begin{subfigure}{.33\textwidth}
        \begin{center}
        \centerline{\includegraphics[width=\columnwidth]{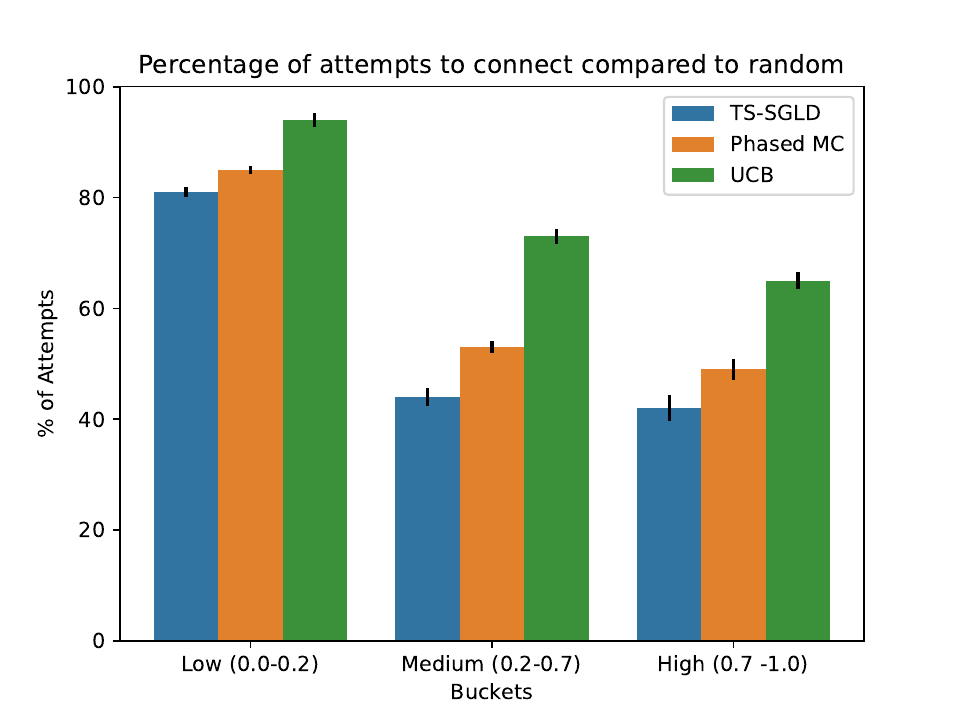}}
        \caption{Number of attempts for connecting $1000$ calls for $1000$ users using pickup + engagement data}
        \label{fig:attempts_pe}
        \end{center}
    \end{subfigure}
    \begin{subfigure}{.33\textwidth}
        \begin{center}
        \centerline{\includegraphics[width=\columnwidth]{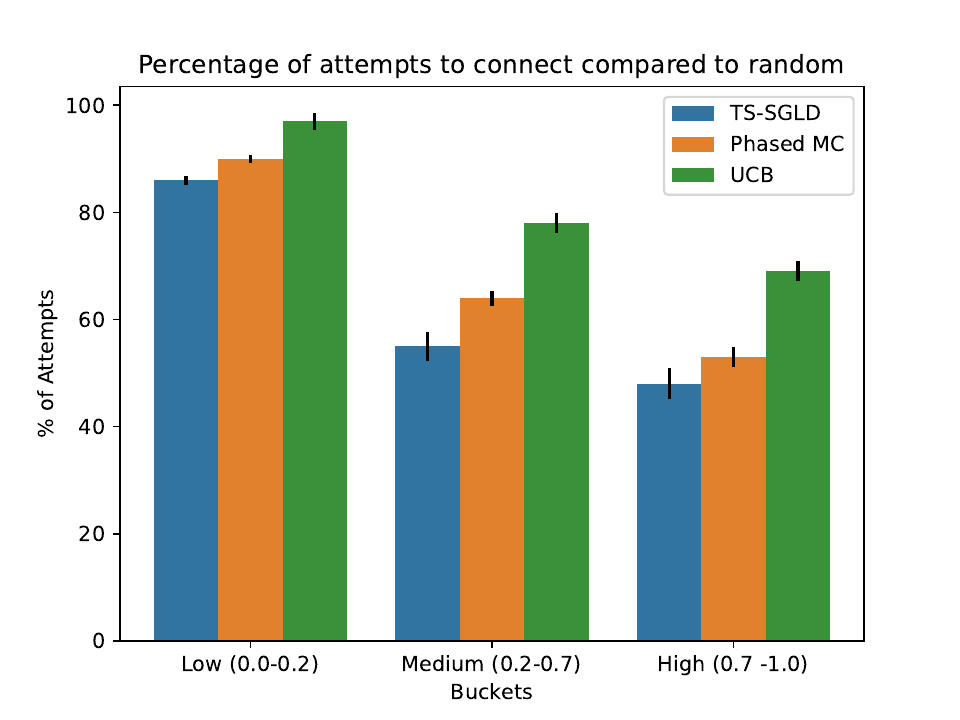}}
        \caption{Number of attempts connecting $1000$ calls for $1000$ users using weekend + weekday slots}
        \label{fig:attempts_weekday}
        \end{center}
    \end{subfigure}
    \hspace{3mm}
    \begin{subfigure}{.33\textwidth}
        \begin{center}
        \centerline{\includegraphics[width=\columnwidth]{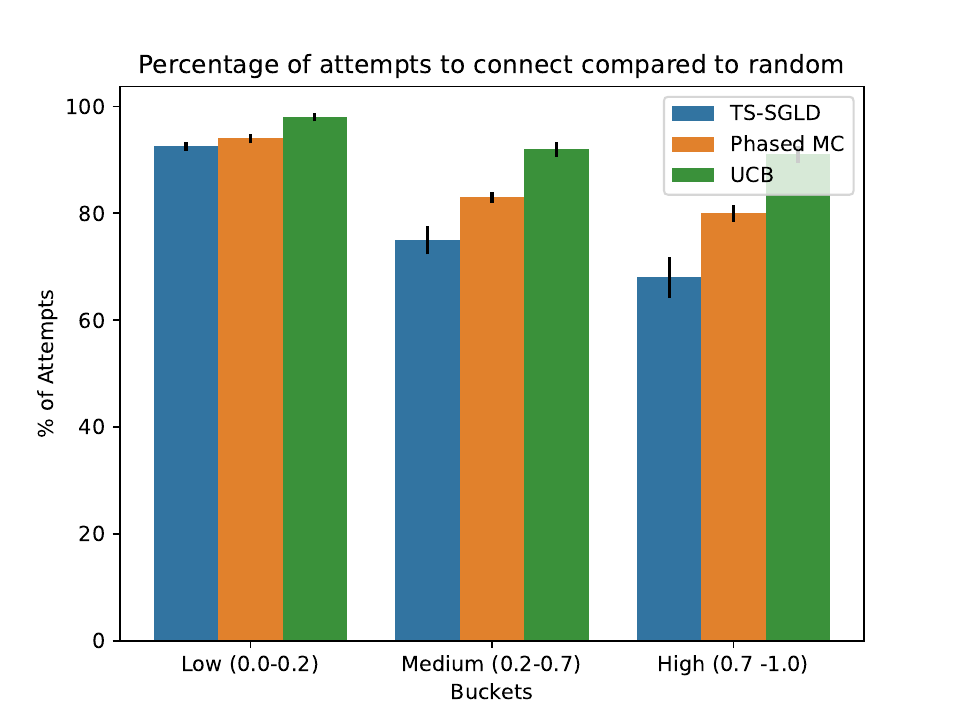}}
        \caption{Number of attempts connecting $1000$ calls for $100$ new users added to the system}
        \label{fig:attempts_new}
        \end{center}
    \end{subfigure}
    \vspace{-0.15in}
    \caption{Number of attempts needed to reach out to beneficiaries in the real-world ARMMAN dataset across $3$ listenership buckets. All plots are relative to the deployed random baseline capped at $100\%$.}
    \vspace{-0.1in}
\end{figure*}


\begin{figure}
    \begin{subfigure}{.5\textwidth}
        \begin{center}
        \centerline{\includegraphics[width=\columnwidth]{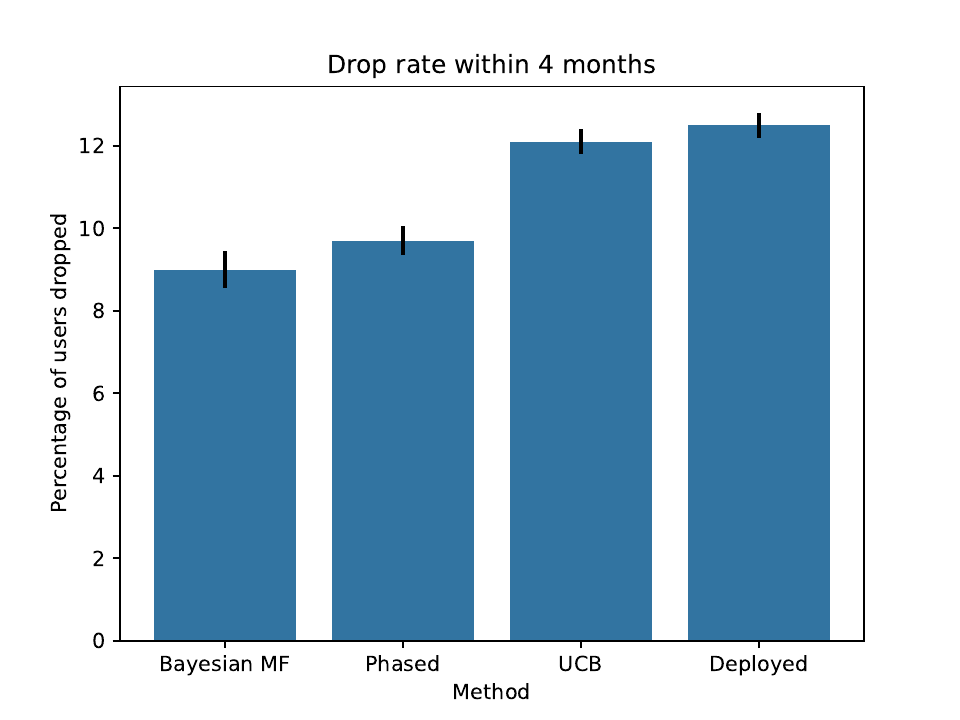}}
        \end{center}
    \end{subfigure}
    \caption{Percentage of dropoffs over a $4$ month period. Dropoffs happen when the engagement goes below $25\%$ for $6$ weeks consecutively or $9$ weeks in a $12$ week period.
    }
    \label{fig:dropoffs}
\end{figure}

\section{Conclusion and Future Work}

We presented an algorithm to solve a collaborative bandits problems for low rank matrices using Thompson sampling. The algorithm was run on time slot inference problem inspired by the real world dataset obtained 
from the largest maternal mHealth program in the world.
 The proposed method showed a significant reduction in the number of call attempts needed to reach out to beneficiaries by $47$ percent compared to the current deployment and $16$ percent compared to the SOTA matrix completion methods, and a further $9$ percent for new enrolments in the program leveraging the already inferred user type to slot preference information from matrix factorization. Reduced attempts free up critical bandwidth, to enable the program operating under budget restrictions to potentially enroll $0.5$ to $1$ million more mothers (assuming on average 5 attempts per voice message) into the program. The method also led to a reduction in drop offs by $7$ and $29$ percent compared to the SOTA and the current deployment, which can effectively enable retaining $0.2$ to $1$ million mothers into the program, ensuring continued access to critical health information for marginalised communities with limited access to resources. We further strengthened our approach by utilizing both the pickup and engagement data.
  To the best of our knowledge, we are also the first to provide Eluder dimension based analysis for Thompson Sampling for the cluster case which is a subset of the general low rank case. While the proposed method can incorporate priors on both user type and user preference matrices, we currently incorporate priors learnt from previous calling data only on the latter due to lack of demographic information about beneficiaries. 

\section{Ethical Considerations}

Acknowledging the responsibility associated with real-world AI systems for undeserved communities, we have closely coordinated with
domain experts from the NGO throughout our analysis. This study
falls into the category of secondary analysis of the aforementioned
dataset. We use the previously collected engagement trajectories of
different beneficiaries participating in the service call program to
train the predictive model and evaluate the performance. All the data collected through the program is owned by the NGO and only the
NGO is allowed to share data while the research group only accesses
an anonymized version of the data.

\textbf{Bias and fairness} There is no demographic data available
for Kilkari beneficiaries. Nonetheless, prior studies such as ~\cite{mohan2021can} point out that exposure to Kilkari helps
improve health behaviors among the most marginalised, thus
helping to close the gap across some inequities in the population. They also indicate that the more marginalised population benefits from higher number of retries in Kilkari calls.
The proposed methodology potentially helps reduce inequities by reducing
the number of retries required, and by improving listening
of Kilkari messages particularly amongst low listeners.



\bibliographystyle{ACM-Reference-Format} 
\bibliography{main}

\appendix

\newpage

\section{Properties of the ARMMAN Pickup/Engagement Matrix}
\label{sec:properties}
The largest eigenvalue of the original matrix is $1.95$ times larger than the second largest value. This corresponds to the large number of zero/low listenership beneficiaries. The smallest of the eigenvalues never goes below $0.3$ times the second largest eigenvalue, showing that the matrix does not satisfy exact low-rank assumption.

\section{Hyperparameter Selection}

As mentioned in the main paper, the appropriate value of rank $C$ needs to be experimentally determined. Increasing the rank $C$ improves performance but the returns diminish beyond a point. Also increasing $C$ significantly incurs computational costs. \cref{fig:regret_cluster_simulated} shows regrets for a single run on different ranks. The final $C$ used was $5$.

The batch size $n$ used for most experiments is $1000$ and the learning rate $\epsilon$ is selected to be between $0.01 - 0.03$. This parameter depends on the value of $N$ (total samples) used. In our experiments we usually draw $1000$ samples at a time up to $35$ time steps. The learning rate needs to be scaled to keep $\frac{N \epsilon}{n}$ constant over rounds. The provided value of $\epsilon$ is for the value of $N$ at the end of all the rounds.

For our algorithms, the value of $\lambda$'s are chosen to be $1$ and the value of $\alpha$ is set such that all entries on all but two rows of $V$ have a uniform prior on $[0,1]$. However for two rows, anticipating users with very low pickup rates, we set the parameter of the exponential distribution ($\alpha$) so close to zero for the entry wise prior to be concentrated around $0$.

\begin{figure}
    \begin{subfigure}{.47\textwidth}
        \begin{center}
        \centerline{\includegraphics[width=\columnwidth]{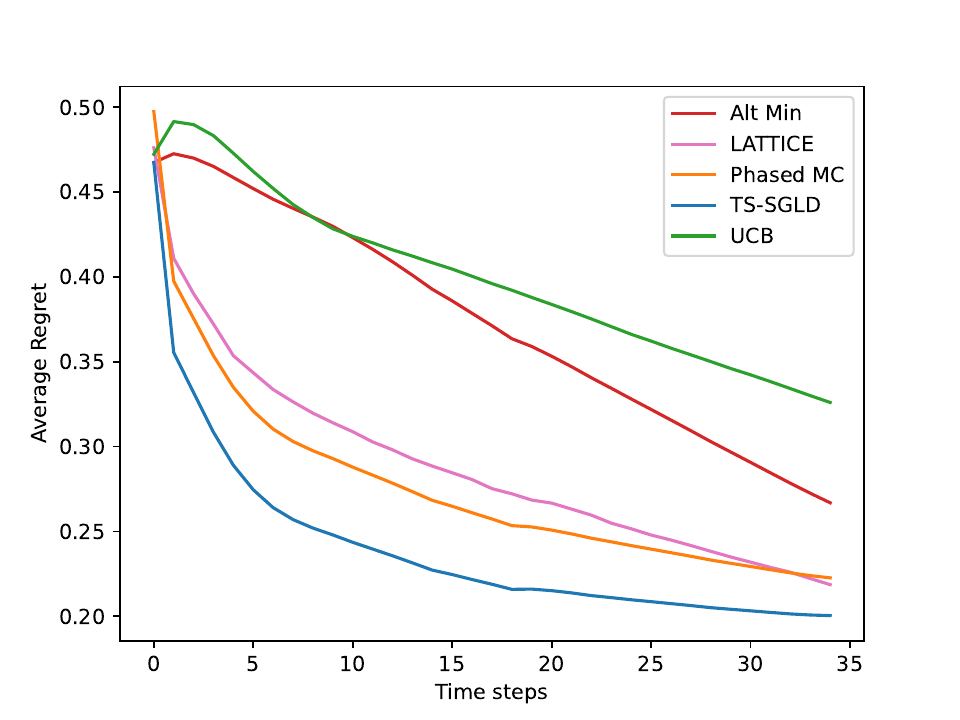}}
        \caption{Average regret for the $C$ cluster case on Simulated Data. The average regret for the different methods averaged over $15$ random matrices. Results are on a $1000 \times 20$ matrix with $5$ clusters. Every time step adds $1000$ samples.}
        \label{fig:regret_cluster_simulated}
        \end{center}
    \end{subfigure}
\end{figure}

\begin{figure*}
    \begin{center}
    \centerline{\includegraphics[width= \columnwidth]{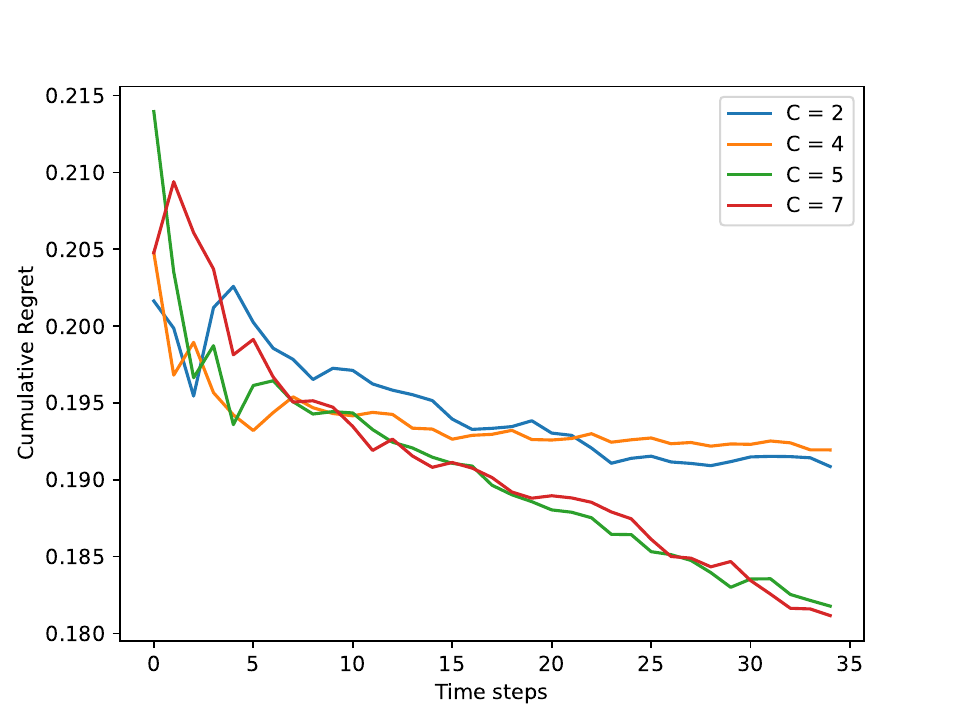}}
    \caption{Average regret for a single run for different ranks on the real world dataset. Increasing the rank reduces regret but only marginally beyond $C = 5$.}
    \label{fig:regret_rank}
    \end{center}
\end{figure*}

\begin{figure*}
    \begin{subfigure}{.45\textwidth}
        \begin{center}
        \centerline{\includegraphics[width=\columnwidth]{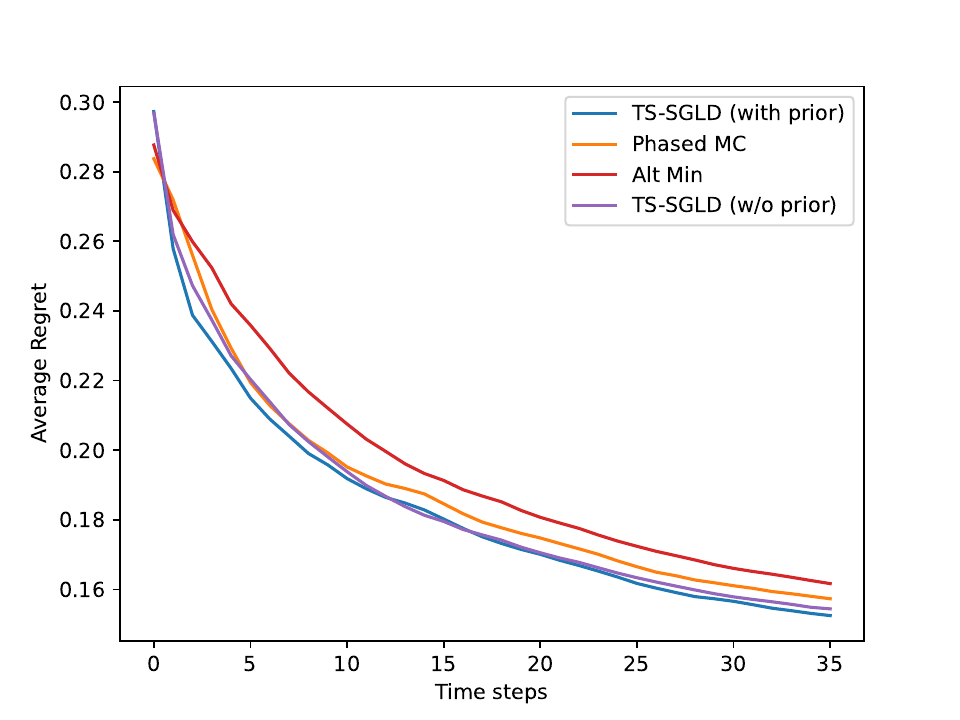}}
        \caption{Average regret on ARMMAN Data for the different methods. Results are for  $1000$ beneficiaries where every time step adds $1000$ samples. }
        \label{fig:regret_1}
        \end{center}
    \end{subfigure}
    \hspace{3mm}
    \begin{subfigure}{.45\textwidth}
        \begin{center}
        \centerline{\includegraphics[width=\columnwidth]{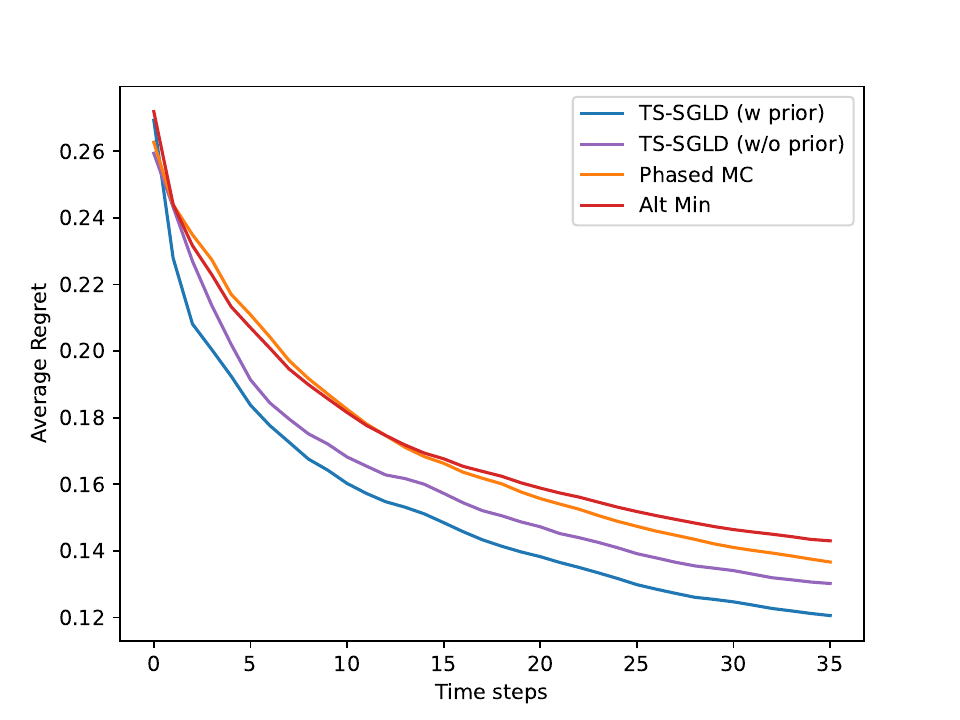}}
        \caption{Average regret for the medium pick-up rate group ($488$/$1000$ users), where every time step adds $1000$ samples for all users.}
        \label{fig:regret_middle_cum}
        \end{center}
    \end{subfigure}
    \hspace{3mm}
    \begin{subfigure}{.45\textwidth}
        \begin{center}
        \centerline{\includegraphics[width=\columnwidth]{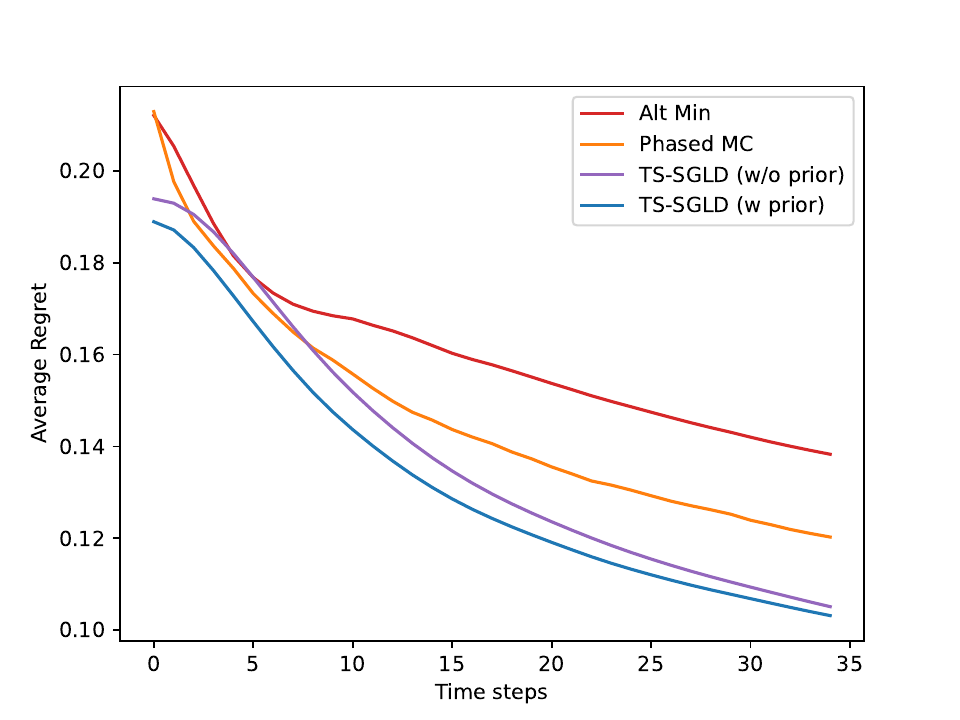}}
        \caption{Average Regret for the pickup + engagement on medium pickup group.}
        \label{fig:regret_pe_avg}
        \end{center}
    \end{subfigure}
    \hspace{3mm}
    \begin{subfigure}{.47\textwidth}
        \begin{center}
        \centerline{\includegraphics[width=\columnwidth]{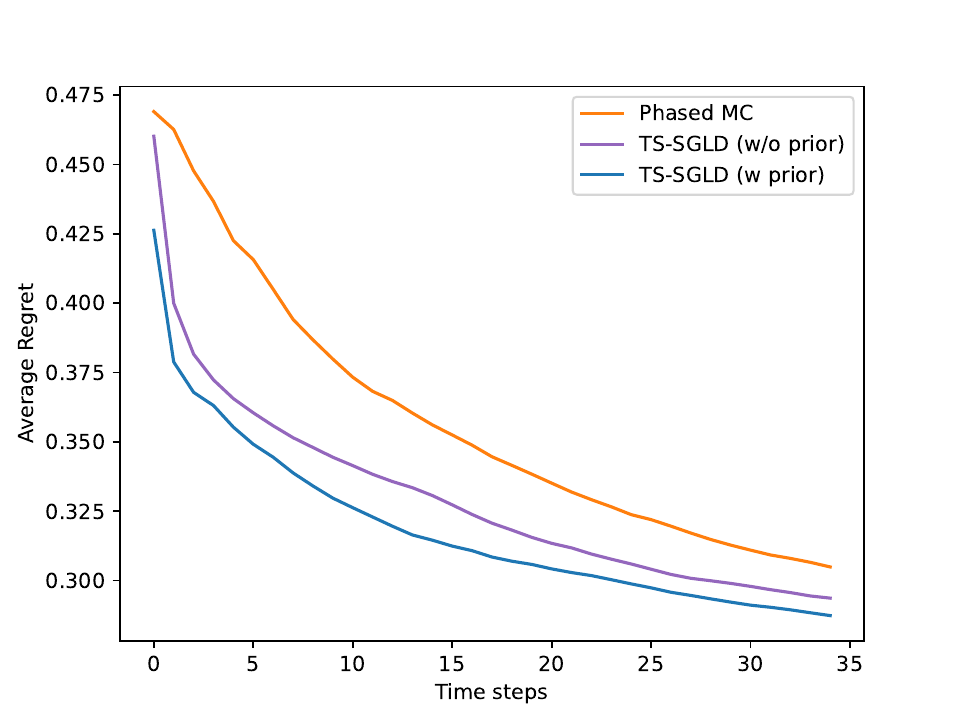}}
        \caption{Average Regret for the weekend + weekday slots on medium pickup group.}
        \label{fig:regret_wewe}
        \end{center}
    \end{subfigure}
    \hspace{3mm}
    \begin{subfigure}{.47\textwidth}
        \centerline{\includegraphics[scale=0.5]{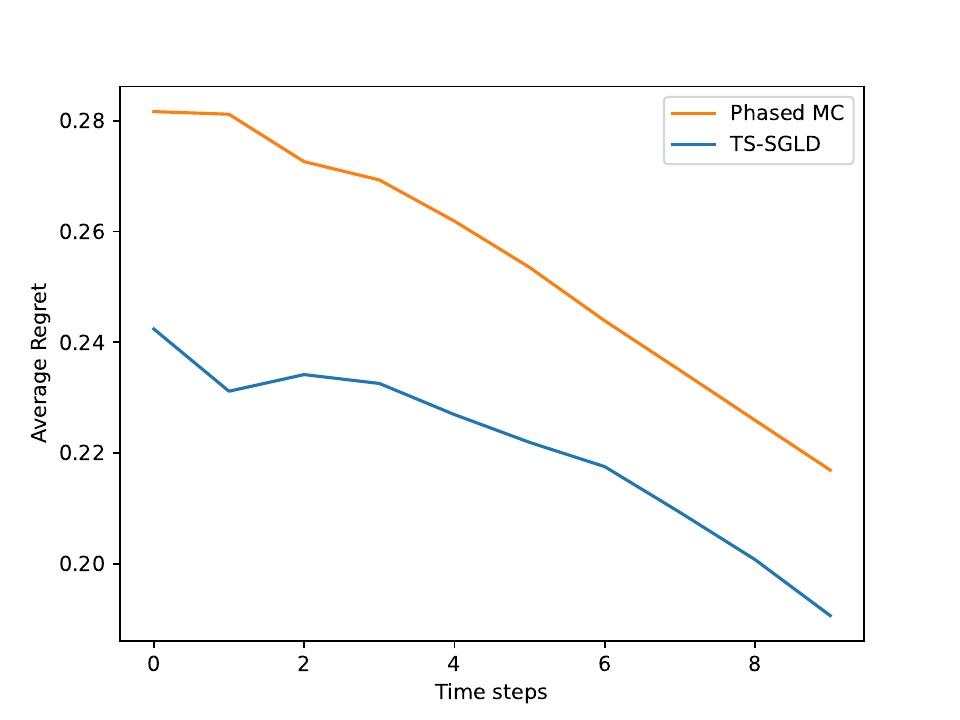}}
        \caption{Average regret on ARMMAN Data for new users added to the system. Results are for $100$ beneficiaries added to $1000$ beneficiaries where every time step adds $1000$ random samples.}
        \label{fig:regret_new}
    \end{subfigure}
    \caption{}
\end{figure*}

\section{Proof of Theoretical Results}

\begin{theorem}[Restated] In the cluster model, for $C$ user-clusters with a total of $N$ users and finite $D$ arms, the $\epsilon$-eluder dimension is at most $(2D+N)C$.
\end{theorem}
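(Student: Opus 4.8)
The plan is to argue directly from the combinatorial definition of Eluder dimension. I would fix an arbitrary scale $\epsilon'\ge\epsilon$ and an arbitrary sequence $(u_1,j_1),\dots,(u_L,j_L)\in[N]\times[D]$ in which every element is $\epsilon'$-independent of its predecessors, and bound $L$. Because the action set consists of indicator vectors $e_1,\dots,e_D$, the $\epsilon'$-independence of step $t$ hands me a pair of hypotheses $(c_t,R_t),(c_t',R_t')\in\Omega_C$ with $\sum_{s<t}\bigl(R_t(c_t(u_s),j_s)-R_t'(c_t'(u_s),j_s)\bigr)^2\le\epsilon'^2$ and $\bigl|R_t(c_t(u_t),j_t)-R_t'(c_t'(u_t),j_t)\bigr|>\epsilon'$. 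The first move is to split the steps into two kinds according to the \emph{source} of this disagreement, and then to charge each step against one of two finite budgets: the $CN$ user--cluster pairs for ``learning the assignment'' and the $CD$ entries of $R$ (with a factor-two slack) for ``learning the reward matrix''.

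Concretely, I would call step $t$ a \emph{clustering step} if its witnesses disagree on the current user's cluster, $c_t(u_t)\ne c_t'(u_t)$, and a \emph{reward step} otherwise, so that $c_t(u_t)=c_t'(u_t)=:k_t$ and the witnesses must then disagree by more than $\epsilon'$ on the single scalar $R(k_t,j_t)$. For a reward step I would charge $t$ to the entry $(k_t,j_t)\in[C]\times[D]$ and show that each entry is charged only boundedly often: if an earlier step $t_1<t_2$ had queried $(u_{t_1},j)$ and both witnesses of $t_2$ placed $u_{t_1}$ in cluster $k$, then the $s=t_1$ term of the past-agreement sum would force $\bigl(R_{t_2}(k,j)-R_{t_2}'(k,j)\bigr)^2\le\epsilon'^2$, contradicting the reward-step disagreement at $(u_{t_2},j)$ since both witnesses of $t_2$ also place $u_{t_2}$ in $k$; the only way out is that $t_2$'s witnesses move $u_{t_1}$ off of $k$, and bookkeeping that contingency costs a factor of two, giving at most $2CD$ reward steps. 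For a clustering step I would show it permanently removes at least one user--cluster pair $(u,k)$ from the pool of pairs that remain jointly attainable by witness hypotheses consistent with the observations so far; since there are $CN$ such pairs, there are at most $CN$ clustering steps. Adding the two bounds yields $L\le 2CD+CN=(2D+N)C$.

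The hard part will be that the witness pairs $(c_t,R_t),(c_t',R_t')$ are allowed to change from step to step, so the ``agreement on the past forces agreement now'' contradictions underlying the reward-step count must be carried out across arbitrary pairs of steps while tracking exactly which cluster each later witness assigns to each earlier user; this is also where the precise constants get fixed (the factor two in $2CD$, and whether the clustering term should read $CN$ or $(C-1)N$) and where one must be careful about steps that sit on the boundary between the two types. I expect the norm bound $S$ and the scale $\epsilon$ to play no role here: a single indicator query pins down a reward coordinate to within $\epsilon'$ outright, which is exactly why the finite-arm case is cleaner than the infinite-arm refinement, where the same reward-learning subsequence is instead controlled by a covering/volume argument and accordingly picks up the $\log(1+2S/\epsilon^2)$ factor.
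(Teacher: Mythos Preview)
Your decomposition into ``clustering'' and ``reward'' steps, with respective budgets $NC$ and $2CD$, is exactly the paper's strategy, and your final count $(2D+N)C$ matches. The one substantive difference in execution is the anchoring: you work from the raw Eluder definition with witness pairs $(c_t,R_t),(c_t',R_t')$ that change freely with $t$, and you rightly flag that tracking ``agreement on the past forces agreement now'' across shifting witnesses is the hard part. The paper instead fixes a single reference hypothesis $(c^*,R^*)$ up front---by declaring a reward sequence $\mathbf{r}^K$ and taking the survivor of the shrinking equivalence classes $\mathcal{F}^\epsilon_k$---and at every step compares $(c^*,R^*)$ against one freshly eliminated $(c_2,R_2)\in\mathcal{F}^\epsilon_{k-1}\setminus\mathcal{F}^\epsilon_k$. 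With one side of every comparison pinned, the paper's case~(a) versus case~(b) is simply ``does $c_2(u_k)=c^*(u_k)$?''; case~(a) contributes at most $CD$, and case~(b) splits into (b1) ``new information about which value $c^*(u_k)$ does \emph{not} take'' (at most $NC$) and (b2) a disguised case-(a) event (at most $CD$). This fixed-reference move is precisely what dissolves the moving-witnesses obstacle you anticipated; your factor-of-two bookkeeping for reward steps and your ``permanently removes a user--cluster pair'' claim for clustering steps are currently promissory notes, and introducing an analogue of $(c^*,R^*)$ is the cleanest way to cash them.
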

\begin{proof}
The reward function $r(u,a) = r(c(u),a)$ depends on the cluster assignment of the user $u$ and also the reward matrix as a function of the cluster id and the action index.

Denote a sequence of actions as: $\mathcal{A}^{k-1} = (u_1,a_1), \ldots u_{k-1},a_{k-1}$. Let the rewards obtained be $\mathbf{r}^{k-1} = r_1, r_2 \ldots r_{k-1} $ under a valid tuple of assignment function and Reward matrix $(c(u),R)$.

Let $\mathcal{F}^{\epsilon}(\mathcal{A}_{k-1}, \mathbf{r}^{k-1})$ denote an equivalence class of pairs $(c'(\cdot), R')$ of assignment function $c':[1:N] \rightarrow [1:C]$ and cluster-action reward matrix $R' \in \mathbb{R}^{C \times  D}$  such that the their rewards are within an $\epsilon$ radius $\ell_2$ ball of the reward sequence $\mathbf{r}^{k-1}$, i.e. $ (c',R') \in \mathcal{F}(\mathcal{A}_{k-1}, \mathbf{r}^{k-1})$ is such that
$\sqrt{\sum \limits_{i=1}^{k-1} (r_i - R'(c'(u_i),a_i))^2} < \epsilon$. We will denote the equivalence class by $\mathcal{F}^{\epsilon}_{k-1}$ for simplicity wherever suitable.


Suppose a sequence $\mathcal{A}^{K}$ is a maximal $\epsilon$-Eluder sequence of length $K$. Then, for some $\mathbf{r}^{K}$ (reward realization), the sequence of equivalence classes it generates $\{\mathcal{F}^{\epsilon}_{k}\}_{k=1}^K$ is such that P1) $\mathcal{F}^{\epsilon}_{k} \subset \mathcal{F}^{\epsilon}_{k-1}$  (strict inclusion) and $(c,R) \in \mathcal{F}^{\epsilon}_{k-1} - \mathcal{F}^{\epsilon}_{k} \Rightarrow |R(c(u_k),a_k) - r_k| > \epsilon $ P2) $|\mathcal{F}_K| = 1$ and it is the first singleton equivalence class in the sequence.

We want to argue that $K \leq (2D+N)C$. Consider an $\epsilon$-Eluder sequence $\mathcal{A}^K$ and the reward vector $\mathbf{r}^{K}$ that specifies the sequence of equivalence class. Note that the action sequence has distinct elements since if an action is repeated then the last action played would have the same reward as the one with the prior copy and property $P_1$ would be violated.

Suppose it creates a sequence of $\epsilon$-Eluder equivalence classes $\mathcal{F}^{\epsilon}_k$. Consider $(c^{*},R^{*}), (c_2,R_2) \in \mathcal{F}^{\epsilon}_{k-1}$ and $(c^{*},R^{*}) \in \mathcal{F}^{\epsilon}_{K} $ and $(c_2,R_2) \notin \mathcal{F}^{\epsilon}_k$. This has to hold for some $c_2,R_2$ since it satisfies strict inclusion property (property P1). 

This shows that $| R_2(c_2(u_k),a_k) - R^{*}(c^{*}(u_k),a_k) | > \epsilon$ but 
\[\sqrt{\sum \limits_{i=1}^{k-1} (R_2(c_2(u_k),a_k) - R^{*}(c^{*}(u_k),a_k))^2} < \epsilon\].

This implies either of the two cases:

a) $c^{*}(u_k) = c_2(u_k)=c$, implying $||R^{*}(c,a_k) - R_2(c,a_k)|| > \epsilon$ or

b) $c^{*}(u_k) \neq c_2(u_k)$.

When a 'split' happens for $\mathcal{F}^{\epsilon}_{k-1}$ we will charge it to case (a) or case (b) and assign to case (a) if both happens. Since the action sequence is distinct, let us count the number of times case (a) happens. We can now define the Eluder action sequence, in terms of cluster id and the action, is given by: $(c_{j_1},a_{k_1}) \ldots (c_{j_{\ell_1}}, a_{k_{\ell_1}})$. For the discrete action setting with a total of $M$ number of actions, the reward matrix dimension is $C \times D$. Thus, after $CD$ instances of case (a), the reward values will be identical to a previous occurrence and hence $R^{*}$ cannot show a separation beyond $\epsilon$ from any other $R_2$ since they stayed within an $\epsilon$ bound for all the prior actions (Section D.1 in \cite{russo2013eluder}). So the number of case (a) occurrences, i.e. $\ell_1$ is at most $CD$. 

Consider the case (b) occurrences given by $(u_{j_1},a_{k_1}) \ldots (u_{j_{\ell_2}},a_{k_{\ell_2}})$. Recall that we get the information that $c^{*}(u_{j_s})$ does not take some value $c'$ and $u_{j_{s}}=u'$. Suppose this is not new information, then there was the first occurrence $j_s'<j_s$ that fell into case (b) where it was first known that $c^{*}(u')$ was known to not take $c'$ and $u_{j_{s'}}=u'$. Then, it must be that case that $||R_2(c',a_{j_{s'}}) - R^{*}(c^{*}(u'),a_{j_{s'}})|| \leq \epsilon$, $||R_2(c',a_{j_{s'}}) - R_3(c',a_{j_{s'}})|| > \epsilon$, $||R^{*}(c^{*}(u'),a_{j_{s'}}) - R_3(c',a_{j_{s'}})|| > \epsilon$ and $||R_2(c',a_{j_{s}}) - 
R^{*}(c^{*}(u'),a_{j_{s}})|| > \epsilon$. This must be true for some $R_3$ that separated from $R_2,R^{*}$ at time $j_{s'}$. So every case (b) either 

b1) provides \textit{new information} about what $c^{*}$ does not take for some  user, or

b2) has an occurrence of case (a) for a pair in the past where two reward matrix disagree for the first time beyond $\epsilon$ on some $(c,a)$ (these matrices are $R_2$ and $R_3$ and for $(c',a_{j_{s'}})$) and one is in the equivalence class containing $R^{*}$.

Case (b1) can occur only at most $NC$ times since for every distinct user one has $C-1$ incorrect values in the range of $c^{*}$ to eliminate. Case (b2) can occur only at most $MC$ times (from the analysis of case (a) since both cases are the same).

Therefore, total number of case $(b)$ occurrences, i.e $\ell_2$ is at most $CD + NC$. 

Therefore, $K = \ell_1 + \ell_2 = 2CD + NC$.

\end{proof}

\begin{theorem}[Restated] For the cluster model, with $C$ user-clusters with a total of $N$ users and $D$-dimensional infinite arm set $\mathcal{A}$, the $\epsilon$-eluder dimension is $\mathcal{O}((2D+N)C)$.
\end{theorem}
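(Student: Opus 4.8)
The plan is to mirror the proof of the finite-arms case and replace the only place where finiteness of the action set was used — the bound of $CD$ on the number of ``case (a)'' splits — by a reduction to the Eluder dimension of a $D$-dimensional linear class. As before, fix a maximal $\epsilon$-Eluder sequence $\mathcal{A}^K$, a reward realization $\mathbf{r}^K$ consistent with some $(c^{*},R^{*})$, and the induced strictly decreasing chain of equivalence classes $\mathcal{F}^{\epsilon}_1 \supset \mathcal{F}^{\epsilon}_2 \supset \cdots \supset \mathcal{F}^{\epsilon}_K$ with $|\mathcal{F}^{\epsilon}_K| = 1$. Each split at step $k$ is witnessed by some $(c_2,R_2) \in \mathcal{F}^{\epsilon}_{k-1}\setminus \mathcal{F}^{\epsilon}_k$ and, exactly as in the finite case, falls into case (a) (here $c^{*}(u_k) = c_2(u_k) =: c$, so $|R^{*}(c,:)^T a_k - R_2(c,:)^T a_k| > \epsilon$ while the two agree to within $\epsilon$ on all earlier actions touching cluster $c$) or case (b) ($c^{*}(u_k) \neq c_2(u_k)$), the latter further decomposed into (b1) ``new information about what $c^{*}$ does not take for some user'' and (b2) ``a past case-(a)-style disagreement of two reward matrices''.

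For case (b1) nothing changes: each of the $N$ users contributes at most $C-1$ eliminated cluster labels, so there are at most $NC$ such steps. For case (a), group the steps by the cluster $c$ on which the disagreement occurs; within a fixed $c$, consider the sub-sequence of actions $a_{k_1}, a_{k_2}, \ldots$ at which case-(a) splits for cluster $c$ happen. By construction, at each such step there exist two vectors $\theta = R^{*}(c,:)$ and $\theta' = R_2(c,:)$ with $\|\theta\|_2, \|\theta'\|_2 < S$ satisfying $\sqrt{\sum_{i<m}(\theta^T a_{k_i} - \theta'^T a_{k_i})^2} \le \epsilon$ but $|\theta^T a_{k_m} - \theta'^T a_{k_m}| > \epsilon$; that is, $a_{k_m}$ is $\epsilon$-independent, in the sense of \cite{russo2013eluder}, of its predecessors with respect to the linear function class $\mathcal{L}_D = \{ x \mapsto \theta^T x : \|\theta\|_2 < S \}$ on the action set $\{a \in \mathcal{A} : \|a\|_2 < \gamma\}$. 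Hence this sub-sequence has length at most $\dim_E(\mathcal{L}_D,\epsilon)$, which for a $D$-dimensional linear class with the stated norm bounds is $O\big(D\log(1 + S^2\gamma^2/\epsilon^2)\big)$ (the linear-class bound, Proposition~6 of \cite{russo2013eluder}, see also \cite{osband2014model}). Summing over the $C$ clusters bounds the number of case-(a) steps by $O\big(CD\log(1 + 2S/\epsilon^2)\big)$, and case (b2), charged as in the finite proof to a first-time case-(a)-style disagreement on some pair $(c,a)$, obeys the same bound. Adding the three contributions gives $K = O\big(2CD\log(1 + 2S/\epsilon^2) + CN\big)$, which is $O((2D+N)C)$ up to the logarithmic factor.

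The main obstacle — the one point where the argument genuinely needs care — is verifying that the per-cluster case-(a) sub-sequence really is an $\epsilon$-independent sequence for $\mathcal{L}_D$: one must check that the witnessing pair $R^{*}(c,:), R_2(c,:)$ stays inside the $\ell_2$-radius-$\epsilon$ tube on \emph{all} earlier case-(a) actions for that cluster, not merely on the immediately preceding one. This follows because membership of $(c^{*},R^{*})$ and $(c_2,R_2)$ in $\mathcal{F}^{\epsilon}_{k-1}$ already forces their cumulative squared reward deviation over the whole prefix below $\epsilon^2$, hence also over the sub-prefix restricted to cluster $c$, so the linear Eluder machinery of \cite{russo2013eluder} applies verbatim; the remaining steps (distinctness of the action sequence, the charging rule between (a) and (b), and the singleton-termination property) are identical to the finite-arms proof.
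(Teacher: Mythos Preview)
Your proposal is correct and follows essentially the same approach as the paper: reduce each per-cluster case-(a) sub-sequence to the Eluder dimension of a $D$-dimensional linear class via Proposition~6 of \cite{russo2013eluder}, sum over the $C$ clusters, and carry over the case-(b1)/(b2) bookkeeping from the finite-arms proof unchanged. Your final paragraph justifying why the per-cluster sub-sequence is $\epsilon$-Eluder for the linear class is more explicit than the paper's one-line assertion that the sub-sequence ``is also $\epsilon$ Eluder'', but the underlying argument and the resulting bound $K = O(2CD\log(1+2S/\epsilon^2) + CN)$ are identical.
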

\begin{proof}

The ideas from Theorem \ref{thm:condind} can be extended for the infinite arm case, with action embeddings having a $D$-dimensional representation.

Let $\Theta \in \mathcal{M} \subset \mathbb{R}^{C \times D}$ denote the cluster-reward mapping such that $R(u,a) = e_{c(u)}^T \Theta a = \Theta(c(u),:)^T a$, where $e_{c(u)}$ denotes an indicator function denoting the cluster of user $u$. We additionally assume bounded $L_2$-norms, meaning there exist constants $S$ and $\gamma$ such that for all $a \in \mathcal{A}$ and $\Theta \in \mathcal{M} $, $||\Theta_{c,:}||_2 \leq S$ and $||a||_2 < \gamma$.

Now, in the sequence of Eluder equivalence classes $\mathcal{F}^k$, we again consider the separation condition as in Theorem \ref{thm:condind}, leading to equivalent two cases:

a) $c^*(u_k) = c_2(u_k) = c$, implying $||R^*(c, a_k) - R_2(c, a_k)|| > \epsilon$ and
b) $c^*(u_k) \neq c_2(u_k)$

Consider the sub-sequence which is $\epsilon$-Eluder and satisfies case (a): $(c_{j_1},a_{k_1}) \ldots (c_{j_{\ell_1}},a_{k_{\ell_1}})  $. Consider only the sub-sequence in this where $c=c_{j_i}$ and $j_i,k_i$ be the last index. This is also $\epsilon$ Eluder.

We have $||R^*(c,a_{k_{i}}) - R_2(c,a_{k_{i}})|| = ||(\Theta^*(c,:) - \Theta_2(c,:))^T a_{k_{i}}|| > \epsilon$ and $L_2$ norm of all other rewards before in the sub-sequence is less than $\epsilon$. Since only $c$-th row of $\Theta, \Theta_2$ is involved, this is effectively a linear model.

Under the norm conditions ($S$ and $\gamma$ bounds), we invoke Proposition 6 from \cite{russo2013eluder} for the linear model, which leads to a $\mathcal{O}(D \log(1+ 2S/\epsilon^2))$ bound on the Eluder sequence length of this sub-sequence. 

There are $C$ different sub-sequences at most. Therefore, $\ell_1 \leq \mathcal{O}(CD \log(1+2S/\epsilon^2)) $

For case (b), the subcase (b1) remains unchanged ($CM$), while (b2) is again limited by $\mathcal{O}(CD \log (1+2S/\epsilon^2))$ by an analogous argument.

Thus, $K = \ell_1 + \ell_2 = \mathcal{O}(2DC\log (1+2S/\epsilon^2) + NC)$.
\end{proof}


\end{document}